\pdfoutput=1
\documentclass[journal,10pt]{IEEEtran}

\usepackage[T1]{fontenc}
\usepackage[utf8]{inputenc}
\usepackage{lmodern}
\usepackage{microtype}
\usepackage{amsmath,amssymb,amsfonts,amsthm,mathtools,bm}
\usepackage{graphicx}
\usepackage{booktabs,array,tabularx,multirow}
\usepackage{enumitem}
\usepackage{xurl}
\usepackage{cite}
\usepackage{stfloats}
\usepackage{placeins}
\usepackage[colorlinks=true,linkcolor=blue,citecolor=blue,urlcolor=blue,hypertexnames=false]{hyperref}

\numberwithin{equation}{section}



\newtheorem{definition}{Definition}[section]
\newtheorem{construction}[definition]{Construction}
\newtheorem{theorem}[definition]{Theorem}

\newtheorem{proposition}[definition]{Proposition}
\newtheorem{assumption}[definition]{Assumption}
\theoremstyle{remark}

\newcolumntype{Y}{>{\raggedright\arraybackslash}X}
\setlist[itemize]{leftmargin=1.0em,itemsep=0.02em,topsep=0.04em}
\setlist[enumerate]{leftmargin=1.15em,itemsep=0.02em,topsep=0.04em}
\allowdisplaybreaks
\newcommand{\dd}{\,\mathrm d}

\newcommand{\transpose}{^{\mathsf T}}
\newcommand{\norm}[1]{\left\lVert #1\right\rVert}
\newcommand{\abs}[1]{\left\lvert #1\right\rvert}

\newcommand{\sgn}{\operatorname{sgn}}

\newcommand{\calA}{\mathcal A}
\newcommand{\calB}{\mathcal B}
\newcommand{\calC}{\mathcal C}
\newcommand{\calD}{\mathcal D}

\newcommand{\calF}{\mathcal F}
\newcommand{\calG}{\mathcal G}
\newcommand{\calK}{\mathcal K}
\newcommand{\calL}{\mathcal L}

\newcommand{\calN}{\mathcal N}
\newcommand{\calO}{\mathcal O}

\newcommand{\calR}{\mathcal R}
\newcommand{\calS}{\mathcal S}
\newcommand{\calT}{\mathcal T}

\newcommand{\Etot}{\mathcal E_{\rm tot}}
\newcommand{\Eint}{\mathcal E_{\rm int}}
\newcommand{\Eperp}{\mathcal E_{\perp}}
\newcommand{\Ecar}{\mathcal E_{\rm car}}
\newcommand{\Et}{E_{\rm tot}}
\newcommand{\Ep}{E_{\perp}}
\newcommand{\Mperp}{M_{\perp}}
\newcommand{\Meff}{M_{\rm eff}}
\newcommand{\Ppoe}{P_{\rm POE}}
\newcommand{\Psipoe}{\Psi_{\rm POE}}
\newcommand{\POE}{\mathrm{POE}}

\newcommand{\Fex}{F_{\rm exch}}
\newcommand{\qc}{q_{\rm c}}
\newcommand{\qy}{q_y}
\newcommand{\qck}{q_{{\rm c},k}}
\newcommand{\peta}{p_{\eta}}
\newcommand{\psperp}{p_{\sigma}^{\perp}}
\newcommand{\Akin}{A_{\rm kin}}
\newcommand{\Adyn}{A_{\rm dyn}}
\newcommand{\Akd}{A_{\rm kd}}
\newcommand{\Nadm}{N_{\rm adm}}
\newcommand{\phys}{\theta_{\rm phys}}
\newcommand{\vbar}{\bar v}
\newcommand{\neff}{n_{\rm e}}
\newcommand{\IP}{\mathrm{IP}}
\newcommand{\AP}{\mathrm{AP}}
\newcommand{\Jperp}{J_{\perp}}
\newcommand{\JL}{J_L}
\newcommand{\Rex}{\mathcal R_{\rm ex}}
\newcommand{\Sigmaori}{\Sigma_s}
\newcommand{\thetaStar}{\theta_*}
\newcommand{\Anonv}{A_{\rm nonv}^{\rm ptp}}
\newcommand{\Rrhs}{R_{\rm rhs}^{\rm int}}
\newcommand{\Rpoe}{R_{\rm POE}^{\rm fin}}

\newcommand{\Kell}{\mathrm K}
\newcommand{\Eell}{\mathrm E}

\title{Natural Locomotion: Principle and Method}
\author{Mirado Mortel, Luc Jaulin, Lionel Lapierre, and Simon Rohou%
\thanks{Preprint version.}}
\date{}

\begin{document}
\maketitle
\thispagestyle{plain}
\pagestyle{plain}

\begin{abstract}
Robotic locomotion can become efficient when mechanisms exploit passive dynamics, compliance, and resonance rather than track prescribed trajectories.  This paper formulates natural locomotion as a general exchange principle for systems whose net motion is mediated by environmental constraints or interactions.  A motion is natural when an internal oscillator returns periodically, the body pose drifts, and the mean Propulsion--Oscillator Exchange power (POE power) vanishes over one cycle.  The selected family is a Natural Locomotion Manifold (NLM).  We develop the conservative mathematical realization of this principle for continuous ideal environmental constraints: the constraints do no external work, total mechanical energy is conserved, and zero mean POE power is an internal exchange with the environment-mediated propulsive channel, not external energy input.

The method is a closed/open construction.  The propulsive channel is first closed to reveal an effective internal oscillator, organized by scalar action-angle structure in one effective degree of freedom or by nonlinear modal sectors in several degrees of freedom.  The channel is then reopened, pose is reconstructed, and accepted cycles must preserve internal recurrence and zero mean POE power.

We demonstrate the principle on two continuous ideal nonholonomic no-slip systems: a Chaplygin-sleigh / pendulum-driven-car and a three-body extension.  In the scalar case, POE closure is proved equivalent to the missing internal return condition, giving a theorem-backed computation of the NLM family.  In the multi-degree case, POE closure remains necessary but must be completed by modal identity, internal return, dynamics consistency, same fixed passive architecture, and nonzero displacement.  Natural locomotion therefore becomes a design question: a passive architecture may support no, one, or several certified NLM families.  The same principle suggests future realizations for discontinuous contacts in walking and continuous non-ideal environmental interactions in swimming or flying.
\end{abstract}

\begin{IEEEkeywords}
Natural locomotion, Natural Locomotion Manifold, nonholonomic locomotion, nonlinear normal modes, action-angle methods, Propulsion--Oscillator Exchange, Chaplygin sleigh, modal certification.
\end{IEEEkeywords}

\section{Introduction and Positioning}
\label{sec:intro}

\IEEEPARstart{B}{io-inspired} locomotion promises efficiency not by overpowering mechanics, but by exploiting them.  Animals and well-designed compliant robots benefit from resonance, elasticity, and passive dynamics because these effects allow motion to emerge with the system rather than against it.  This broad intuition underlies passive-dynamic walking, compliant locomotion, and resonance-based design \cite{McGeer1990,Collins2005,Kashiri2018,Schonebaum2021}.  It also motivates templates and anchors in biomechanics and robotics, where simplified mechanical structures expose the organization of efficient motion \cite{Blickhan1989,FullKoditschek1999,Holmes2006}.  The unresolved point is not whether passive mechanics can help.  It is what physically selects a natural locomotor motion for a given mechanism.

The classical modal answer is incomplete for locomotion.  Linear normal modes describe small oscillations near equilibrium, and nonlinear normal modes extend modal structure to finite amplitudes \cite{Rosenberg1962,ShawPierre1993,Kerschen2009PartI,Peeters2009PartII,AlbuSchaeffer2020,HallerPonsioen2016}.  In robotics, nonlinear modal and natural-motion-manifold control methods can excite, track, or stabilize efficient cyclic families once those families are known \cite{DellaSantina2021ISER,DellaSantina2021CDC,Bjelonic2022,Calzolari2023,Calzolari2026NMM}.  The question addressed here is prior to excitation or stabilization: which family should count as natural for the mechanism itself?  A locomotor cannot close its complete state while moving: the internal variables may recur, but the group pose must drift.  Geometric locomotion supplies this internal-loop/pose-drift split \cite{KellyMurray1995,OstrowskiBurdick1998,BlochKrishnaprasadMarsdenMurray1996,Bloch2015}; continuation methods then help trace branches and provide independent comparisons \cite{Raff2022,KuznetsovContinuation,DoedelOldeman2012,Rajaomarosata2025IFAC}.  Yet neither a reconstructing shape loop nor a continued branch is by itself a mechanical selection rule.

This paper formulates that rule at the level of environment-mediated locomotion and develops its conservative continuous-constraint realization.  The environmental mediator may be a contact, a fluid interaction, or, in the mathematical setting treated here, an ideal velocity constraint.  In the present realization the environment is not an actuator: the ideal constraint restricts admissible velocities, does no external work, and makes the propulsive coupling channel through which internal oscillation reconstructs into body displacement.  POE power is the bookkeeping variable for this constraint-mediated exchange; it tests whether the opened propulsive channel returns, over one internal cycle, the energy it borrows from the internal oscillator.  Figure~\ref{fig:principle} summarizes why natural locomotion replaces full-state recurrence by internal recurrence with pose drift.

The two demonstrations are continuous ideal nonholonomic no-slip systems.  The two-segment (2SEG) model is a Chaplygin-sleigh or pendulum-driven-car system: a knife-edge carrier carries one passive internal rotor with relative yaw angle.  The three-segment (3SEG) model is a three-body extension with two serial passive yaw joints.  The 2SEG has one effective internal degree of freedom after the propulsive channel is closed, so one scalar exchange condition can become a theorem-supported selector.  The 3SEG has two effective internal directions, so it requires closed-channel modal sectors before the opened-channel POE test can be interpreted.  Discontinuous contacts in walking and continuous non-ideal interactions in swimming or flying require different balance laws; they are future mathematical realizations of the same organizing question, not claims proved in this paper.

\paragraph*{Contributions}
This paper makes four contributions.
\begin{enumerate}
\item It formulates cyclic POE closure as a natural-locomotion principle for environment-mediated locomotion and develops the conservative continuous-ideal-constraint realization used in this paper.
\item It proves and implements the one-effective-internal-degree scalar case.  For the 2SEG under regularity assumptions, POE closure is equivalent to the missing scalar internal return, which yields a direct finite-speed computation of the NLM family.
\item It gives the closed-channel then opened-channel method for systems with several effective internal degrees of freedom.  In that setting POE remains necessary, but modal identity, state return, dynamics consistency, fixed physical parameters, and nonzero displacement must also be certified.
\item It frames a design question for passive locomotor architectures: for a fixed mechanism, are there zero, one, or multiple natural-locomotion families, and which mechanical features favor or suppress each mode?
\end{enumerate}

The paper proceeds in the construction order.  Section~\ref{sec:principle} states natural locomotion as internal recurrence, pose drift, and exchange closure.  Section~\ref{sec:action_and_models} relates the scalar action picture to the modal extension.  Section~\ref{sec:internal_mechanics} derives the closed/open split and the 2SEG mechanics; Sections~\ref{sec:2seg_theorem}--\ref{sec:2seg_results} give the exact scalar theorem, numerical realization, and validation.  Sections~\ref{sec:3seg_model_primer}--\ref{sec:3seg_results} give the 3SEG modal realization and same-physical result.  The discussion returns to the broader robotics question of architecture-supported natural modes and to how the conservative no-slip setting can serve as a reference point for later hybrid-contact, fluid-interaction, dissipative, forced, walking, swimming, or flying extensions.

\begin{figure*}[!t]
\centering
\includegraphics[width=0.98\textwidth]{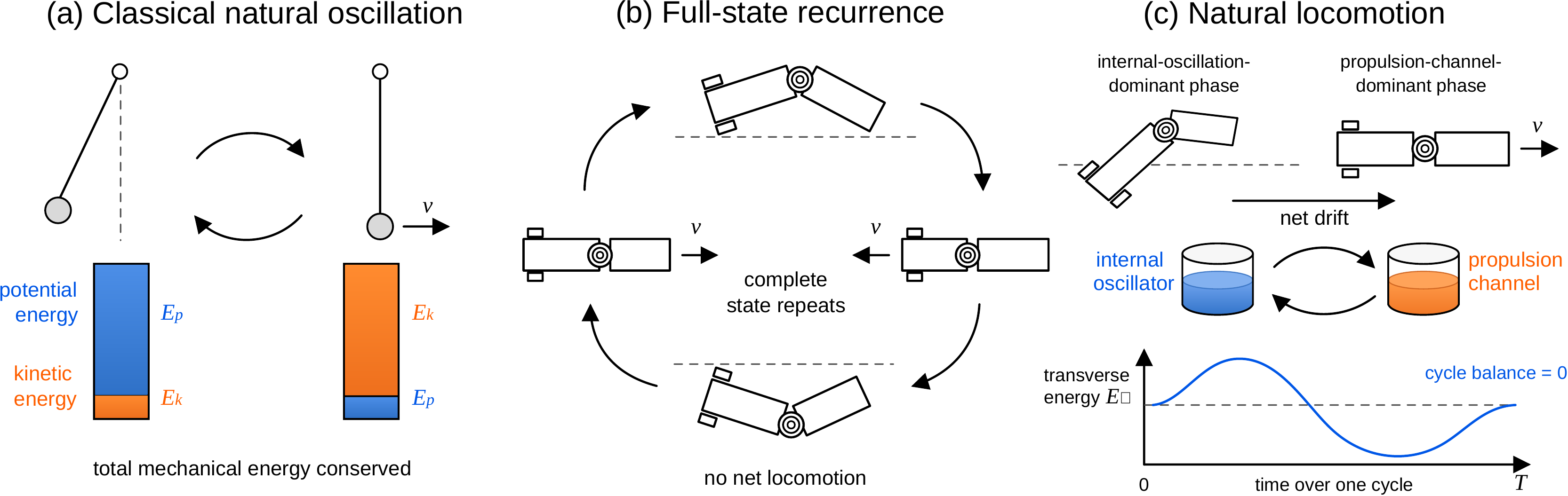}
\caption{Natural oscillation, full-state recurrence, and natural locomotion.  A classical oscillator closes its full state and redistributes kinetic and potential storage.  A locomotor must let pose drift, so recurrence is internal.  Natural locomotion additionally requires zero net exchange between the internal oscillator and the propulsive channel over one internal cycle.}
\label{fig:principle}
\end{figure*}

\section{Natural Oscillation and Natural Locomotion}
\label{sec:principle}

A natural oscillation is not a trajectory imposed on a mechanical system.  It is a recurrent organization that the mechanical system can sustain because its internal storage variables return.  The simple pendulum already separates this idea from a fixed-frequency picture.  Away from the small-amplitude limit, the period, waveform, and turning points change with energy.  The persistent object is a family of oscillations organized by storage exchange.  In one degree of freedom, a regular energy value determines turning points, branch velocity, quadrature, and action.  In several degrees of freedom, the same total energy generally contains many phase relations and modal partitions, so modal information must be added.

For an ordinary conservative oscillator,
\begin{equation}
  \Etot(q,\dot q)=T(q,\dot q)+V(q)=\Et .
\label{eq:min_etot}
\end{equation}
In a single degree of freedom a regular value of \(\Et\) describes a one-dimensional loop.  In several degrees of freedom it describes a larger energy surface and must be paired with a modal family.  The dimensional distinction is
\begin{equation}
\begin{array}{rcl}
  1\hbox{-DOF} &:& \Et \mapsto \hbox{one regular oscillatory loop},\\[0.2em]
  N\hbox{-DOF} &:& (\Et,k) \mapsto \hbox{one modal oscillatory family}.
\end{array}
\label{eq:oscillator_dim_balance}
\end{equation}

For a locomotor with complete state \(X=(g,z)\), where \(g\) is the group pose and \(z\) denotes internal variables, nonzero displacement prevents complete recurrence:
\begin{equation}
  X(t+T)\neq X(t).
\label{eq:no_full_recurrence}
\end{equation}
The recurrence compatible with locomotion is instead
\begin{equation}
  z(t+T)=z(t),\qquad
  g(t+T)=g(t)\Delta g,\qquad
  \Delta g\neq e .
\label{eq:internal_pose_drift}
\end{equation}
Equation \eqref{eq:internal_pose_drift} is the geometric skeleton.  It distinguishes internal closure from pose drift, but it does not decide whether the internal loop is mechanically natural.

The POE row supplies that decision.  Let \(\Ppoe(t)\) be the instantaneous power exchanged between the internal oscillator and the propulsive channel.  In the conservative ideal-constraint realization, this is not the mechanical power of an external force.  Ideal constraints do no external work; their role is to restrict admissible velocities, and that restriction mediates an internal redistribution between oscillatory storage and carrier motion.  The naturality condition does not remove instantaneous exchange; it removes net exchange over one completed internal cycle:
\begin{equation}
  \Psipoe
  =\frac{1}{T}\int_0^T \Ppoe(t)\dd t=0 .
\label{eq:poe_balance_min}
\end{equation}
\enlargethispage{4pt}
The normalization by \(T\) is immaterial for the zero but useful when comparing cycles.  A natural-locomotion family can then be written
\begin{equation}
  \calN_k=\{\Gamma_{k,\vbar}:\vbar\in I_k,\;
  z(T)=z(0),\;\Delta g\neq e,\;\Psipoe=0\},
\label{eq:nlm_family_min}
\end{equation}
with \(k\) denoting either the scalar sector or a nonlinear modal sector.

The word ``manifold'' is used in this reduced mechanical sense.  On a regular branch and after quotienting the global pose, the NLM is a two-dimensional invariant object in the reduced state: one coordinate is phase along a closed internal cycle and the second is the family parameter, such as energy, mean speed, action, or amplitude.  In the unreduced locomotor state, the same object lifts to a relative-periodic family because each internal period is accompanied by a group displacement.  Figure~\ref{fig:3seg_surface} gives a concrete preview of that meaning for the three-segment case: it shows two such accepted reduced families at one fixed passive architecture, with the six top plots corresponding to anti-phase samples and the six bottom plots to in-phase samples.

\begin{figure*}[!t]
\centering
\includegraphics[width=0.985\textwidth]{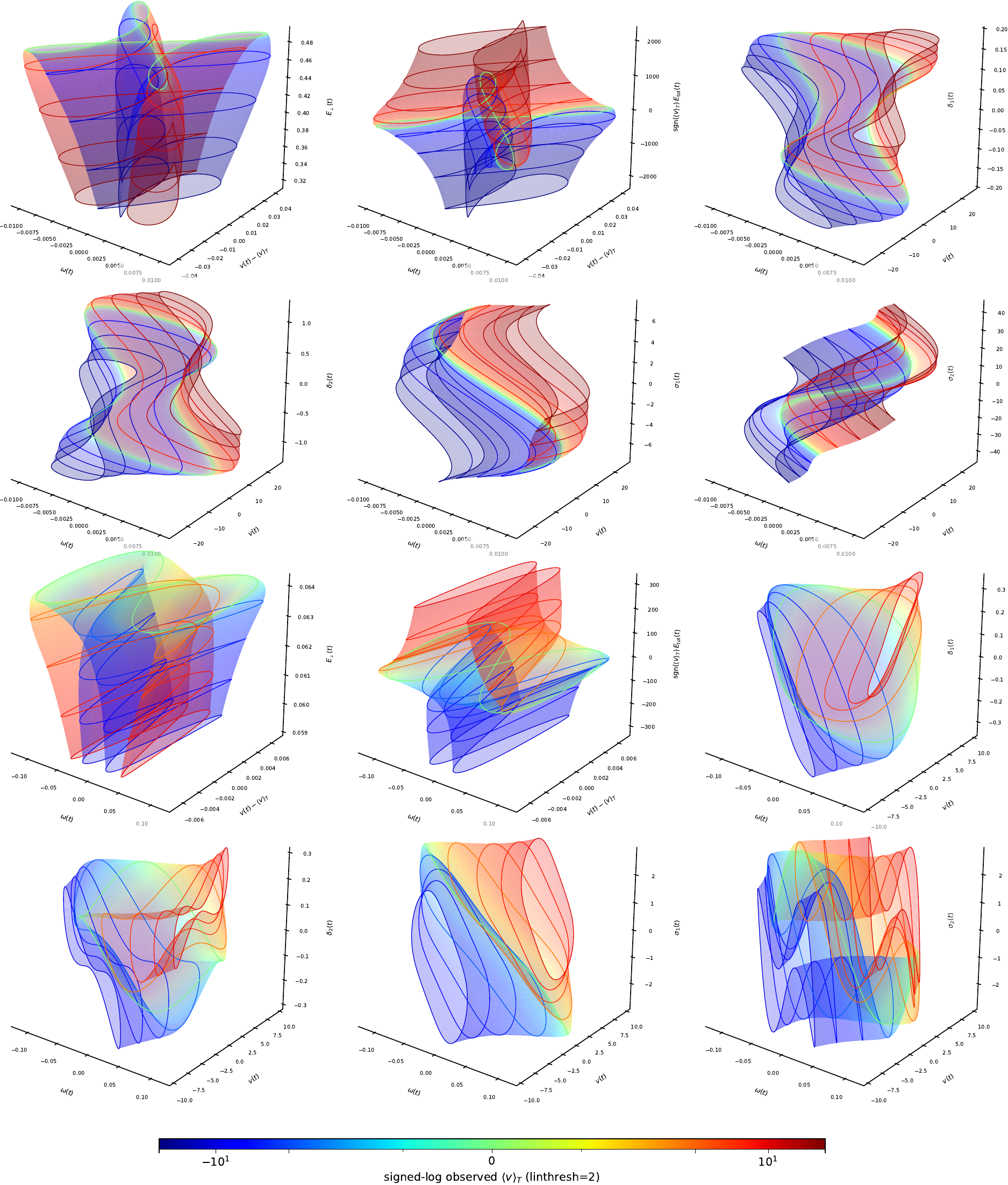}
\caption{Omega--speed--transverse-energy surface visualization for sampled accepted three-segment cycles at the same fixed passive architecture.  The six top plots are anti-phase (AP) samples and the six bottom plots are in-phase (IP) samples.  The surface view is used here as a visual example of Natural Locomotion Manifold structure: a regular accepted family is represented, in reduced variables, by phase along the cycle and a family parameter.  The same figure is interpreted quantitatively in the 3SEG results.}
\label{fig:3seg_surface}
\end{figure*}

The same statement has a useful mechanical reading at increasing resolution.  The word ``closed'' means that the admissible pseudo-momentum coordinate driving the carrier is set to zero, so the internal oscillator can be read without propulsive drift.  The word ``opened'' means that this channel is restored and the resulting cycle is tested for internal return, nonzero pose drift, and zero net exchange.  Table~\ref{tab:principle_reading} separates the physical sentence from the rows that are later implemented numerically.  It also records where the scalar and modal cases diverge.

\begin{table}[!t]
\caption{Same natural-locomotion principle at increasing mechanical resolution.}
\label{tab:principle_reading}
\centering
\footnotesize
\setlength{\tabcolsep}{3pt}
\renewcommand{\arraystretch}{1.00}
\begin{tabularx}{\columnwidth}{@{}p{0.39\columnwidth}X@{}}
\toprule
Physical statement & Mechanical reading \\
\midrule
Natural oscillator & Transverse support after closing the carrier pseudo-momentum channel. \\
Propulsion channel closed & \(\qc=0\), zero admissible pseudo-momentum leaf. \\
Propulsion channel opened & \(\qc(t)\neq0\), finite-speed internal lift. \\
Zero net POE & \(\Eperp(T)-\Eperp(0)=0\). \\
One-effective-degree closure & POE closes the remaining scalar section coordinate. \\
Multi-effective-degree closure & POE plus modal, state, branch, and same-physical certificates. \\
\bottomrule
\end{tabularx}
\end{table}

The construction may be summarized as two maps.  The carrier-closed map constructs internal supports,
\begin{equation}
  (\theta_{\rm phys},k,a)
  \longmapsto
  \gamma_{\perp,k}(a;\theta_{\rm phys}),
\label{eq:closed_support_map}
\end{equation}
where \(a\) is an amplitude, action, or continuation coordinate.  The opened-channel map tests survival under propulsion,
\begin{equation}
  \gamma_{\perp,k}
  \xrightarrow{\calL_{\rm int}}
  z_{\perp,k}
  \xrightarrow{\calL_{\rm open}}
  \Gamma_{k,\vbar}
  \xrightarrow{\calR_{\rm POE}}
  \hbox{return residual}.
\label{eq:opened_test_map}
\end{equation}
For \(\neff=1\), the return residual can become exact by a scalar storage row after the carrier and speed rows are closed.  For \(\neff>1\), the residual must remember the nonlinear modal sector that generated the support.

This principle fixes the computational order, illustrated in Fig.~\ref{fig:closedopen}.  The propulsive channel is first closed.  This exposes the internal oscillator and its natural supports.  The propulsive channel is then reopened.  The opened lift is accepted only if the internal state returns, the exchange balance closes, and the pose accumulates nonzero displacement.  The closed/open operation prevents a common confusion: a closed shape loop can be geometrically useful without being natural, and a natural internal oscillator can fail to survive propulsion if the opened channel leaves a net transverse-storage imbalance.

\begin{figure*}[!t]
\centering
\includegraphics[width=0.98\textwidth]{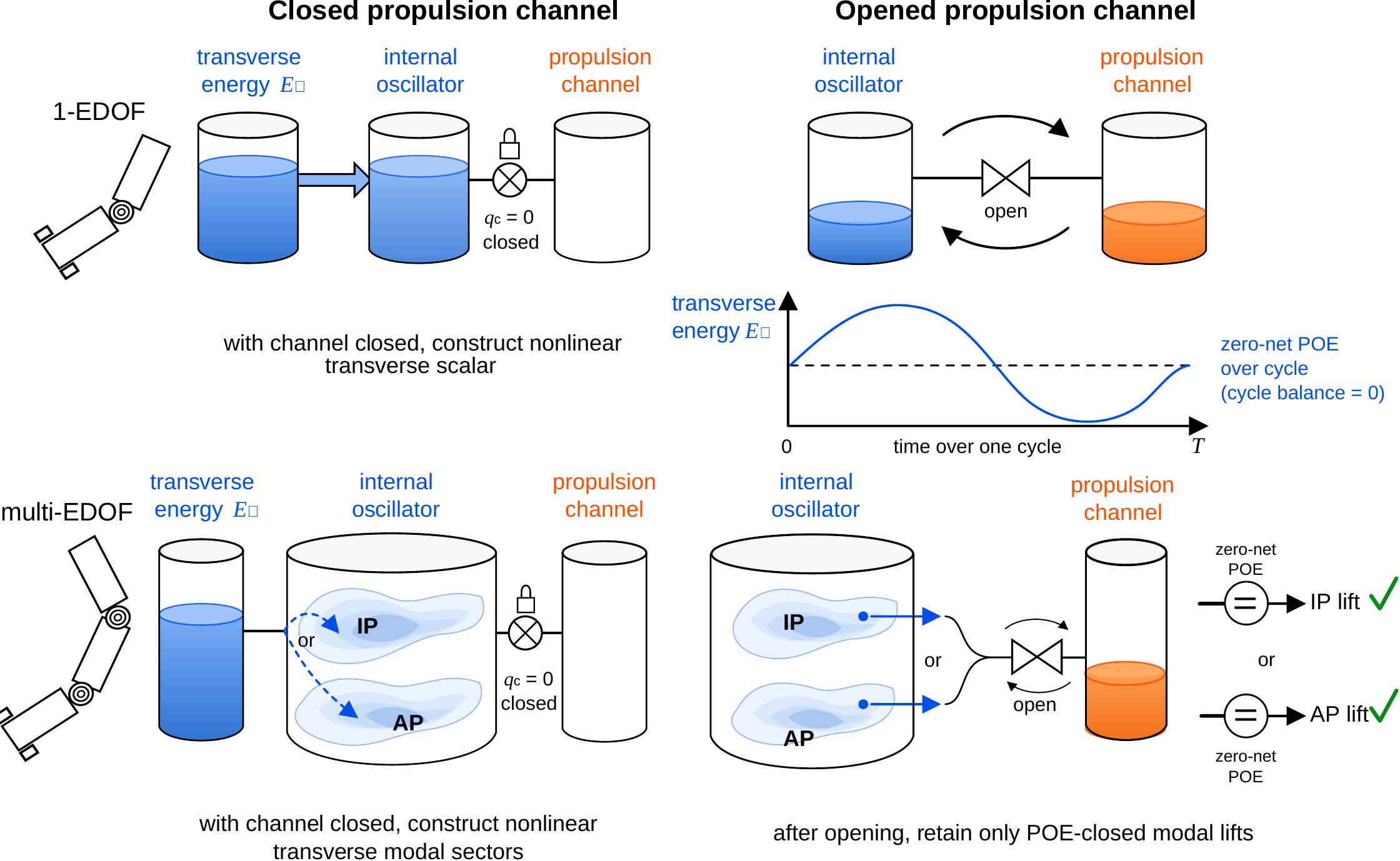}
\caption{Closed/open construction.  In one effective internal degree of freedom, the closed channel gives a scalar transverse oscillator; after opening, POE closes the remaining scalar storage coordinate.  In multiple effective internal degrees of freedom, the closed channel first supplies nonlinear transverse modal sectors such as in-phase (IP) and anti-phase (AP); after opening, POE is necessary but must be paired with modal and state certificates.}
\label{fig:closedopen}
\end{figure*}

\section{Action, Quadrature, and Modal Organization}
\label{sec:action_and_models}

The scalar template is the conservative one-degree-of-freedom oscillator
\begin{equation}
  \Etot(q,\dot q)=\frac12M(q)\dot q^2+V(q)=\Et .
\label{eq:onecoord_energy}
\end{equation}
A regular libration at energy \(\Et\) has turning points \(q_-(\Et)\) and \(q_+(\Et)\), branch velocity
\begin{equation}
  \dot q_\pm(q;\Et)=\pm\sqrt{\frac{2(\Et-V(q))}{M(q)}} ,
\label{eq:qdot_branch}
\end{equation}
and action
\begin{equation}
  J(\Et)=\frac1\pi\int_{q_-}^{q_+}\sqrt{2M(q)(\Et-V(q))}\dd q .
\label{eq:action_scalar}
\end{equation}
The frequency follows from the action-energy relation, \(\Omega(J)=\dd \Et/\dd J\), when the regular action chart is valid \cite{Arnold1989,NayfehMook1995,SandersVerhulstMurdock2007}.  This scalar picture is the prototype for the 2SEG transverse oscillator.

The simple pendulum makes the same point in a familiar closed form.  With
\begin{equation}
  \Etot(\theta,p_\theta)=\frac{p_\theta^2}{2m\ell^2}+mg\ell(1-\cos\theta),
\label{eq:pendulum_energy}
\end{equation}
and \(k=\sin(\theta_{\max}/2)\) below the separatrix, the finite-amplitude period and action can be expressed through complete elliptic integrals,
\begin{subequations}
\label{eq:pendulum_action}
\begin{align}
  T(k)&=4\sqrt{\frac{\ell}{g}}\,\Kell(k),\\
  J(k)&=\frac{8m\ell^2\sqrt{g/\ell}}{\pi}
  \left[\Eell(k)-(1-k^2)\Kell(k)\right].
\end{align}
\end{subequations}
This elementary example is included to fix the meaning of ``natural.''  Even before locomotion, a natural family is not a single small-signal frequency; it is a storage-indexed family with a time law determined by the mechanics.  Figure~\ref{fig:action2seg} transfers this scalar action picture to the 2SEG transverse storage used below.

\begin{figure*}[!t]
\centering
\includegraphics[width=1.00\textwidth]{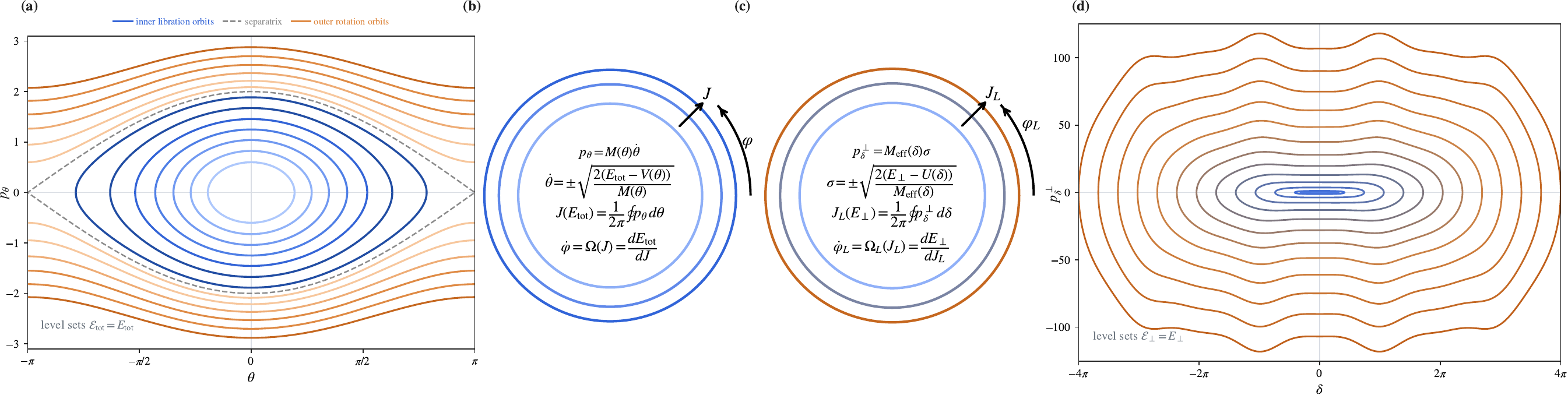}
\caption{Action organization and the scalar 2SEG analogue.  The one-degree-of-freedom action picture organizes a family of oscillations by energy and action.  The carrier-closed two-segment transverse dynamics has the analogous scalar storage \(\Eperp\), action-like coordinate \(J_L\), and finite-amplitude support before the propulsive channel is reopened.}
\label{fig:action2seg}
\end{figure*}

The two-segment system has one effective transverse coordinate.  Its carrier-closed transverse storage is
\begin{equation}
  \Eperp(\delta,\sigma)=\frac12\Meff(\delta)\sigma^2+U(\delta)=\Ep .
\label{eq:2seg_Eperp}
\end{equation}
Thus
\begin{equation}
  \sigma_\pm(\delta;\Ep)=\pm\left[\frac{2(\Ep-U(\delta))}{\Meff(\delta)}\right]^{1/2},
\label{eq:2seg_sigma_branch}
\end{equation}
with turning angles \(\Ep-U(\delta_\pm)=0\), period
\begin{equation}
  \frac{T_L(\Ep)}{2}=\int_{\delta_-}^{\delta_+}
  \sqrt{\frac{\Meff(\delta)}{2(\Ep-U(\delta))}}\dd\delta,
\label{eq:2seg_period}
\end{equation}
and action-like coordinate
\begin{equation}
  \JL(\Ep)=\frac1\pi\int_{\delta_-}^{\delta_+}
  \sqrt{2\Meff(\delta)(\Ep-U(\delta))}\dd\delta .
\label{eq:2seg_action}
\end{equation}
This closed support exposes the scalar oscillator.  The opened theorem in Section~\ref{sec:2seg_theorem} concerns whether that oscillator survives the yaw-propulsion channel.

The three-segment system has two effective internal shape directions.  With \(y=(r,\sigma)\), \(\eta=(v,\omega)\), and block mass matrix
\begin{equation}
M(r)=
\begin{bmatrix}
M_{GG}(r)&M_{GS}(r)\\
M_{SG}(r)&M_{SS}(r)
\end{bmatrix},
\label{eq:mass_split_intro}
\end{equation}
the carrier-closed transverse layer is defined by the Schur complement
\begin{subequations}
\label{eq:schur_layer_intro}
\begin{align}
A(r)&=M_{GG}^{-1}(r)M_{GS}(r),\qquad
\eta_\perp=-A(r)\sigma,\\
\Mperp(r)&=M_{SS}-M_{SG}M_{GG}^{-1}M_{GS},\\
\Eperp(r,\sigma)&=\frac12\sigma\transpose\Mperp(r)\sigma+U(r).
\end{align}
\end{subequations}
The closed transverse field preserves \(\Eperp\), but \(\Eperp=\Ep\) does not select one loop.  Modal sectors must be constructed.  Around the transverse origin, parent IP/AP directions are obtained from
\begin{equation}
  K_0u_j=\Omega_j^2M_0u_j,
  \qquad
  u_i\transpose M_0u_j=\delta_{ij},
  \qquad j\in\{\IP,\AP\}.
\label{eq:parent_modes_intro}
\end{equation}
Finite-amplitude transverse supports are then continued from these seeds before being lifted into the opened locomotor dynamics.

A finite-amplitude support is therefore not accepted solely because it preserves energy.  The modal sector supplies the missing organization,
\begin{equation}
\begin{gathered}
  \hbox{energy level} + \hbox{modal sector}\\
  \longrightarrow
  \hbox{transverse natural-oscillation family}.
\end{gathered}
\label{eq:modal_completion_template}
\end{equation}
The NLM construction applies this completion before pose reconstruction.  The modal object is the carrier-closed internal oscillator, not a periodic full-state locomotor orbit.  The shared nonholonomic convention for the scalar and modal examples is shown in Fig.~\ref{fig:kinematics}: both systems have a constrained carrier, but they differ in the number of internal transverse directions.

\begin{figure}[!t]
\centering
\includegraphics[width=1.00\columnwidth]{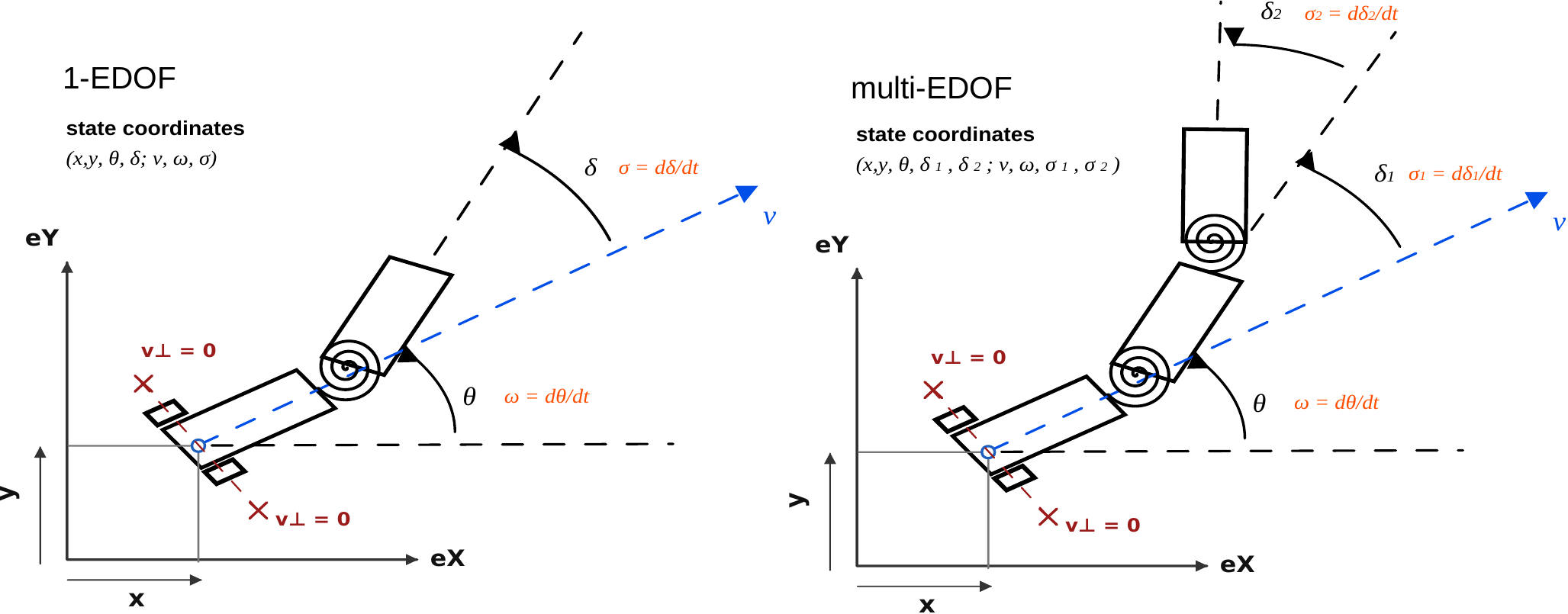}
\caption{Kinematic convention for the two-segment (2SEG) and three-segment (3SEG) nonholonomic models.  Both models share a carrier pose and a knife-edge/no-side-slip velocity constraint; they differ in the number of internal transverse directions available to the oscillator.}
\label{fig:kinematics}
\end{figure}

\section{Internal Mechanics and the Propulsive Channel}
\label{sec:internal_mechanics}

The principle becomes computable only after identifying which mechanical coordinate represents the propulsive channel and which storage remains when that channel is closed.  Mechanically, the propulsive channel is the admissible group-motion momentum that would move the carrier through the nonholonomic constraint.  In the ideal nonholonomic setting, constraint forces are eliminated by projection and do no external work, so total mechanical energy is conserved.  Conservation nevertheless permits internal exchange: the constraint restricts velocities, this restriction mediates the carrier coupling, and the opened coupling redistributes energy between the carrier channel and the transverse oscillator.  Closing the channel is not deleting the carrier from the model; it is setting this momentum to zero to expose the oscillator that the carrier motion would otherwise exchange energy with.  The configuration is split into a group pose \(g\in G\) and shape variables \(r\in S\).  After symmetry reduction and constraint compatibility, the complete state is written
\begin{equation}
  X=(g,r,\eta,\sigma),
  \qquad \sigma=\dot r,
\label{eq:complete_state_gen}
\end{equation}
where \(\eta\) denotes admissible group-velocity coordinates expressed internally.  The pose-reduced internal state is
\begin{equation}
  z=(r,\eta,\sigma),
\label{eq:internal_state_gen}
\end{equation}
and the transverse state is
\begin{equation}
  y=(r,\sigma).
\label{eq:transverse_state_gen}
\end{equation}
The word transverse is mechanical, not only notational: it refers to the oscillator obtained after the propulsive pseudo-momentum channel has been closed.

Two reconstructions must be kept distinct.  The internal lift maps a transverse motion into internal variables,
\begin{equation}
  y\mapsto z,
\label{eq:internal_lift_gen}
\end{equation}
whereas pose reconstruction maps an internal trajectory into the complete state,
\begin{equation}
  z\mapsto X,
  \qquad g^{-1}\dot g=\Xi(r,\eta,\sigma).
\label{eq:pose_reconstruction_gen}
\end{equation}
The propulsive channel lives inside the internal state, not in the absolute pose.  It is the admissible group-momentum channel through which the ideal-constraint coupling can exchange energy with the oscillator while the complete conservative system preserves total mechanical energy.

Let \(\xi\in\mathfrak g\) be the body velocity associated with \(g\).  The constrained Lagrange-d'Alembert equation on complete coordinates \(x=(g,r)\) is
\begin{equation}
  \frac{\dd}{\dd t}\frac{\partial L}{\partial \dot x}
  -\frac{\partial L}{\partial x}
  =\calA(x)^{\mathsf T}\lambda,
\label{eq:LdA_full_gen}
\end{equation}
with Pfaffian constraints \(\calA(x)\dot x=0\).  If \(N(x)\) spans \(\ker\calA(x)\), admissible velocities satisfy \(\dot x=N(x)u_a\), and multiplication by \(N(x)^{\mathsf T}\) removes the multipliers:
\begin{equation}
  N(x)^{\mathsf T}\left(\frac{\dd}{\dd t}\frac{\partial L}{\partial \dot x}
  -\frac{\partial L}{\partial x}\right)=0.
\label{eq:LdA_projected_gen}
\end{equation}
The Pfaffian constraints may be written
\begin{equation}
  \calA_G(r)\xi+\calA_S(r)\sigma=0.
\label{eq:pfaffian_gen}
\end{equation}
A choice of admissible basis gives
\begin{equation}
  \xi=-\Akin(r)\sigma+B(r)\eta,
\label{eq:admissible_split_gen}
\end{equation}
where the first term is the kinematic connection induced by the constraints and the second spans remaining admissible group-velocity directions.  In matrix form,
\begin{equation}
  \begin{bmatrix}\xi\\ \sigma\end{bmatrix}
  =\Nadm(r)
  \begin{bmatrix}\eta\\ \sigma\end{bmatrix},
  \qquad
  \Nadm(r)=
  \begin{bmatrix}B(r)&-\Akin(r)\\0&I\end{bmatrix}.
\label{eq:Nadm_gen}
\end{equation}
Substituting \eqref{eq:Nadm_gen} into the kinetic energy pulls the full metric onto admissible velocities,
\begin{equation}
  M_{\rm int}(r)=\Nadm(r)^{\mathsf T}M_{\rm full}(r)\Nadm(r)
  =\begin{bmatrix}M_{GG}&M_{GS}\\ M_{SG}&M_{SS}\end{bmatrix}.
\label{eq:Mint_gen}
\end{equation}
The internal conservative energy is
\begin{align}
  \Eint
  &=\frac12
  \begin{bmatrix}\eta\\ \sigma\end{bmatrix}^{\mathsf T}
  M_{\rm int}(r)
  \begin{bmatrix}\eta\\ \sigma\end{bmatrix}+U(r) \notag\\
  &=\frac12\eta^{\mathsf T}M_{GG}\eta
  +\eta^{\mathsf T}M_{GS}\sigma
  +\frac12\sigma^{\mathsf T}M_{SS}\sigma+U(r).
\label{eq:Eint_expanded_gen}
\end{align}
The admissible group pseudo-momentum is the momentum associated with \(\eta\),
\begin{equation}
  \peta=\frac{\partial \Eint}{\partial \eta}
  =M_{GG}\eta+M_{GS}\sigma.
\label{eq:peta_gen}
\end{equation}
Assuming \(M_{GG}\) is nonsingular on the considered domain, define
\begin{equation}
\begin{aligned}
  \Adyn(r)&=M_{GG}^{-1}M_{GS},\\
  \qc&=M_{GG}^{-1}\peta=\eta+\Adyn(r)\sigma .
\end{aligned}
\label{eq:Adyn_qc_gen}
\end{equation}
Thus \(\eta=\qc-\Adyn(r)\sigma\).  Completing the square gives
\begin{equation}
\begin{aligned}
  \Eint
  ={}&\frac12\qc^{\mathsf T}M_{GG}\qc
  +\frac12\sigma^{\mathsf T}\Mperp\sigma+U(r),\\
  \Mperp
  ={}&M_{SS}-M_{SG}M_{GG}^{-1}M_{GS}.
\end{aligned}
\label{eq:complete_square_gen}
\end{equation}
Therefore
\begin{equation}
  \Eint=\Ecar+\Eperp,
  \qquad
  \Ecar=\frac12\qc^{\mathsf T}M_{GG}\qc,
\label{eq:energy_split_gen}
\end{equation}
with transverse storage
\begin{equation}
  \Eperp(r,\sigma)=\frac12\sigma^{\mathsf T}\Mperp(r)\sigma+U(r).
\label{eq:Eperp_general_gen}
\end{equation}
The closed propulsive channel is the zero-pseudo-momentum leaf
\begin{equation}
  \qc=0
  \quad\Longleftrightarrow\quad
  \peta=0
  \quad\Longleftrightarrow\quad
  \eta_\perp=-\Adyn(r)\sigma .
\label{eq:closed_leaf_gen}
\end{equation}
Combining \eqref{eq:closed_leaf_gen} with \eqref{eq:admissible_split_gen} gives the carrier-closed body velocity
\begin{equation}
  \xi_\perp=-\bigl(\Akin(r)+B(r)\Adyn(r)\bigr)\sigma
  =-\Akd(r)\sigma,
\label{eq:Akd_gen}
\end{equation}
where \(\Akd(r)=\Akin(r)+B(r)\Adyn(r)\) is the kinodynamic connection.  The transverse momentum and action associated with the closed oscillator are
\begin{equation}
  \psperp=\Mperp(r)\sigma,
  \qquad
  \Jperp=\frac{1}{2\pi}\oint_{\gamma_\perp}\psperp\cdot\dd r .
\label{eq:transverse_action_general_gen}
\end{equation}
The same admissible momentum split therefore defines both the propulsive channel \(\qc\) and the action of the closed internal oscillator.

\begin{proposition}[Closed-channel oscillator and opened-channel exchange]
\label{prop:closed_open_energy_gen}
Assume \(M_{GG}\) and \(\Mperp\) are positive definite on the considered region.  The zero-pseudo-momentum condition \(\qc=0\) makes \(\Eperp\) the conservative storage of the internal oscillator.  If the channel is opened by \(\qc(t)\neq0\), the instantaneous POE power is the defect of this storage along the opened internal dynamics:
\begin{equation}
  \Ppoe(t)=\frac{\dd}{\dd t}\Eperp(r(t),\sigma(t)).
\label{eq:PPOE_def_prop_gen}
\end{equation}
Consequently, a lifted cycle has zero net Propulsion--Oscillator Exchange if and only if
\begin{equation}
  \Eperp(r(T),\sigma(T))=\Eperp(r(0),\sigma(0)).
\label{eq:Eperp_return_prop_gen}
\end{equation}
\end{proposition}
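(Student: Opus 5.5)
The argument rests on the exact energy split \eqref{eq:energy_split_gen}, \(\Eint=\Ecar+\Eperp\), together with conservation of \(\Eint\) along the reduced Lagrange--d'Alembert dynamics \eqref{eq:LdA_projected_gen}. Conservation holds because the constraints are ideal and the admissible velocities \(\dot x=N(x)u_a\) annihilate \(\calA(x)^{\mathsf T}\lambda\). The claim that \(\Eperp\) is the conservative storage on the closed channel will also use that \(\Eperp\) depends only on \((r,\sigma)\).

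\textbf{Step 1: conservation of \(\Eint\).} Contracting the projected equation \eqref{eq:LdA_projected_gen} with \(u_a\) gives \(\frac{\dd}{\dd t}\Eint=0\). Equivalently, pairing \eqref{eq:LdA_full_gen} with \(\dot x\) gives \(\lambda^{\mathsf T}\calA\dot x=0\), using that the Lagrangian is time-independent. Pose reduction replaces \((\dot g,\dot r)\) by \((\eta,\sigma)\) through \eqref{eq:Nadm_gen}, and \(U\) is group-invariant, so the kinetic-plus-potential energy equals \eqref{eq:Eint_expanded_gen}.

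\textbf{Step 2: \(\Eperp\) is the closed-channel storage.} On the leaf \(\qc\equiv0\) we have \(\Ecar=\frac12\qc^{\mathsf T}M_{GG}\qc=0\), hence \(\Eint=\Eperp\), and Step 1 makes \(\Eperp\) constant.

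The real issue here is invariance of the leaf: does \(\qc(0)=0\) imply \(\qc(t)=0\)? For a general nonholonomic system this fails, since \(\dot p_\eta\) contains curvature or gyroscopic terms that need not vanish when \(p_\eta=0\). I would therefore interpret ``closing the channel'' as imposing \(\qc=0\) as a constraint. This yields a closed transverse field on \(y=(r,\sigma)\), defined as the reduced Euler--Lagrange flow with Lagrangian \(\frac12\sigma^{\mathsf T}\Mperp\sigma-U\) (the Routhian at zero momentum). That flow is an ordinary time-independent Lagrangian system with positive-definite metric \(\Mperp\), so \(\Eperp\) is its energy and is conserved.

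I expect this interpretive point to be the main obstacle. I would state it explicitly: the closed oscillator is defined as the Routh-reduced transverse flow, and its conservation of \(\Eperp\) is the standard energy identity. Positive definiteness of \(M_{GG}\) is what makes \(\qc\) and \(\Adyn\) well defined. Positive definiteness of \(\Mperp\) makes \(\Eperp\) a genuine storage and the branches \eqref{eq:2seg_sigma_branch} real.

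\textbf{Step 3: the POE power on the opened channel.} Along the opened dynamics, Step 1 gives \(\frac{\dd}{\dd t}\Eperp=-\frac{\dd}{\dd t}\Ecar\). The instantaneous power that the propulsive channel draws from the oscillator is therefore exactly the rate of change of \(\Eperp\). I would take \eqref{eq:PPOE_def_prop_gen} as this identification, noting that it equals the negative rate of change of carrier energy and involves no external work.

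\textbf{Step 4: the cycle criterion.} Integrate \eqref{eq:PPOE_def_prop_gen} over \([0,T]\) with the fundamental theorem of calculus, which applies because solutions are \(C^1\) in \((r,\sigma)\). This gives \(T\Psipoe=\Eperp(T)-\Eperp(0)\). Since \(T>0\), \(\Psipoe=0\) if and only if \eqref{eq:Eperp_return_prop_gen} holds.
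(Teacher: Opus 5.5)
Your proposal is correct and follows the same route as the paper: the split \(\Eint=\Ecar+\Eperp\) with \(\Ecar=0\) on the closed leaf, and conservation of \(\Eint\), so that \(\frac{\dd}{\dd t}\Eperp=-\frac{\dd}{\dd t}\Ecar\) is an internal exchange rather than external work, followed by integration over the period. Your caveat that the leaf \(\qc=0\) need not be invariant, resolved by taking the closed oscillator to be the zero-momentum Routh flow, matches the paper's implicit reading of it as ``the Hamiltonian flow of \(\Eperp\)''; the only slip is that \(\frac{\dd}{\dd t}\Eperp\) is the power delivered to the oscillator, not drawn from it, which does not affect the zero-mean criterion.
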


\begin{proof}
The closed condition \(\qc=0\) removes \(\Ecar\) from \eqref{eq:energy_split_gen}; the remaining internal storage is \(\Eperp\).  When \(\qc\) is nonzero, the full internal dynamics no longer coincide with the Hamiltonian flow of \(\Eperp\), so the time derivative of \(\Eperp\) measures the power exchanged with the opened propulsive channel.  This POE power is an internal balance term, not an external-force power: in the ideal constrained model total mechanical energy is conserved, while \(\Eperp\) and \(\Ecar\) can exchange through the opened channel.  Integrating \eqref{eq:PPOE_def_prop_gen} over a period gives \eqref{eq:Eperp_return_prop_gen}.
\end{proof}

The split also explains why an NLM is not a full-state nonlinear normal mode (NNM).  A full-state NNM would impose
\begin{equation}
(g(t+T),r(t+T),\eta(t+T),\sigma(t+T))=(g(t),r(t),\eta(t),\sigma(t)),
\label{eq:full_state_closed_gen}
\end{equation}
which includes \(g(t+T)=g(t)\).  Natural locomotion instead uses
\begin{equation}
(r(t+T),\eta(t+T),\sigma(t+T))=(r(t),\eta(t),\sigma(t)),
\label{eq:internal_closed_gen}
\end{equation}
with a nontrivial group increment.  Internal recurrence is therefore posed on \(z\), while the closed oscillator is posed on \(y\) after the propulsive pseudo-momentum channel has been removed.

\subsection{2SEG Mechanical Model and Scalar Reduction}
\label{subsec:2seg_model_primer}

The 2SEG mechanism is the one-shape pendulum-driven car used for the scalar theorem and direct numerical realization.  It is a Chaplygin-sleigh carrier with one passive internal appendage.  Its configuration is
\begin{equation}
  Q=SE(2)\times S^1,\qquad q=(g,\delta),
\label{eq:2seg_config_model}
\end{equation}
where \(g\) is the planar carrier pose and \(\delta\) is the appendage yaw relative to the carrier.  In the body frame,
\begin{equation}
  \xi=g^{-1}\dot g=(v_x,v_y,\omega),\qquad \sigma=\dot\delta .
\label{eq:2seg_body_vel_model}
\end{equation}
Here \(v_x\) is forward speed, \(v_y\) is lateral speed, \(\omega\) is carrier yaw rate, and \(\sigma\) is internal shape rate.  The ideal knife-edge constraint is \(v_y=0\), so the admissible quasi-velocity is
\begin{equation}
  \nu=(v,\omega,\sigma)^{\mathsf T},\qquad v=v_x .
\label{eq:2seg_quasivel_model}
\end{equation}
The base has mass \(m_1\), planar yaw inertia \(I_1\), and center of mass at distance \(l_1\) from the knife-edge contact/pivot along the body axis.  The internal appendage has mass \(m_2\), planar yaw inertia \(I_2\), and center of mass at distance \(l_2\) from the pivot along the appendage.  The length \(l\) is the nondimensionalization scale, taken as \(l=l_1+l_2\) in the compact ratios below.  The potential \(U(\delta)\) is the nondimensional passive torsional potential; a dimensional source potential \(V\) is divided by a chosen energy scale to obtain \(U\).  After the constraint force is projected out, the conservative reduced model has the form
\begin{equation}
  \dot\delta=\sigma,\qquad
  M(\delta)\dot\nu+h(\delta,\nu)+[0,0,U'(\delta)]^{\mathsf T}=0,
\label{eq:2seg_reduced_model}
\end{equation}
with energy identity
\begin{equation}
  E(\delta,\nu)=\tfrac12\nu^{\mathsf T}M(\delta)\nu+U(\delta),
  \qquad \dot E=0 .
\label{eq:2seg_energy_model}
\end{equation}
With these physical meanings fixed, the nondimensional parameters used below are
\begin{equation}
\begin{gathered}
  \alpha=l_1/l,
  \qquad \beta=l_2/l,
  \qquad \mu=m_2/(m_1+m_2),\\
  j_i=I_i/((m_1+m_2)l^2).
\end{gathered}
\label{eq:2seg_nd_params_model}
\end{equation}
The reported scalar realization uses
\begin{equation}
  \alpha=\varepsilon,
  \quad \beta=1-\varepsilon,
  \quad \mu=1-\varepsilon,
  \quad j_1=\gamma\alpha^3,
  \quad j_2=\gamma\beta^3 .
\label{eq:2seg_eps_specialization_model}
\end{equation}
In this notation the nondimensional mass matrix contains the entries
\begin{equation}
\bar M(\delta)=
\begin{bmatrix}
1 & -\rho & -\rho\\
-\rho & B_{11} & B_{12}\\
-\rho & B_{12} & B_{22}
\end{bmatrix},
\qquad \rho(\delta)=\mu\beta\sin\delta,
\label{eq:2seg_mass_entries_model}
\end{equation}
with
\begin{equation}
\begin{aligned}
B_{11}(\delta)&=j_1+j_2+\alpha^2+\mu\beta^2
 +2\mu\alpha\beta\cos\delta,\\
B_{12}(\delta)&=j_2+\mu\beta^2+\mu\alpha\beta\cos\delta,\\
B_{22}&=j_2+\mu\beta^2 .
\end{aligned}
\label{eq:2seg_B_entries_model}
\end{equation}
The scalar inertia \(\Meff\), determinant \(\Delta\), and coupling \(r\) used in the theorem and solver are Schur-complement quantities derived from \eqref{eq:2seg_mass_entries_model}.  The determinant \(\Delta\) is the regularity denominator of the carrier block, \(\Meff\) is the effective inertia seen by the internal rotor after the carrier momentum has been closed, and \(r\) is the yaw-propulsion coupling that reappears when the carrier channel is opened.  They are evaluated explicitly in Section~\ref{sec:2seg_numerics}; their role is to make the transverse storage \(\Eperp=\frac12\Meff(\delta)\sigma^2+U(\delta)\) the closed-channel oscillator energy.

\section{Exact Scalar Closure in the One-Effective-Degree 2SEG}
\label{sec:2seg_theorem}

The finite-speed theorem uses a reduced opened-channel 2SEG chart.  For prescribed nonzero mean speed \(\vbar\), introduce
\begin{equation}
  h=|\vbar|^{-1},\qquad
  s=\sgn(\vbar),\qquad
  \nu=hv,\qquad
  \qy=v\omega .
\label{eq:finite_speed_vars}
\end{equation}
The oriented section variables are
\begin{equation}
\begin{gathered}
  Y=(\delta,\zeta,u,Q),\qquad \zeta>0,\\
  \sigma=s\zeta,
  \qquad \nu=su,
  \qquad \qy=sQ .
\end{gathered}
\label{eq:oriented_vars}
\end{equation}
The physical yaw velocity is recovered as \(\omega=hQ/u\) whenever \(u\neq0\).  The exact opened rows are
\begin{equation}
\dot\delta=\sigma,\qquad
\dot\sigma=F_\sigma,\qquad
\dot\nu=F_\nu,\qquad
\dot\qy=F_{\qy},
\label{eq:opened_rows}
\end{equation}
where the transverse row is
\begin{equation}
F_\sigma=
\frac{r(\delta)\{\qy-\rho(\delta)(\sigma+\omega)^2\}-U'(\delta)}
{\Meff(\delta)} .
\label{eq:Fsigma}
\end{equation}
The yaw and speed rows \(F_\nu,F_{\qy}\) are the exact finite-speed transport rows of the reduced 2SEG model; they are evaluated on the same regular annulus as the storage coefficients.  The theorem only needs the first return and the storage identity, but the return itself is generated by \eqref{eq:opened_rows}.

The generalized transverse source is
\begin{equation}
\begin{aligned}
  \Fex^{\rm ex}
  &=\Meff(\delta)\dot\sigma+\frac12\Meff'(\delta)\sigma^2+U'(\delta)\\
  &=r(\delta)\{\qy-\rho(\delta)(\sigma+\omega)^2\}
  +\frac12\Meff'(\delta)\sigma^2,
\end{aligned}
\label{eq:Fex_exact}
\end{equation}
Thus \(\Fex^{\rm ex}\) is an exchange source, not an external source: it is generated by the opened ideal-constraint coupling and redistributes conserved mechanical energy between oscillator and propulsion coordinates.  The exact POE power is the storage derivative
\begin{equation}
  \Ppoe^{\rm ex}(t)=\sigma(t)\Fex^{\rm ex}(t)
  =\frac{\dd}{\dd t}\Eperp(\delta(t),\sigma(t)).
\label{eq:Ppoe_exact}
\end{equation}
Let \(T(Y_0)\) be the first positive return time to the oriented section \(\delta=0\), with
\begin{equation}
  Y^+(Y_0)=(0,\zeta^+,u^+,Q^+).
\label{eq:return_map}
\end{equation}
The opened exchange-return residual is
\begin{equation}
  \Rex(Y_0)=
  \begin{bmatrix}
  \Psi_{\rm POE}^{\rm ex}(Y_0)\\ C_u(Y_0)\\ C_Q(Y_0)\\ S(Y_0)
  \end{bmatrix},
\label{eq:Rex_def}
\end{equation}
with rows
\begin{subequations}
\label{eq:Rex_rows}
\begin{align}
  \Psi_{\rm POE}^{\rm ex}(Y_0)
  &=\int_0^{T(Y_0)}P_{\rm POE}^{\rm ex}(t;Y_0)\dd t,\\
  C_u(Y_0)&=u(T(Y_0);Y_0)-u(0;Y_0),\\
  C_Q(Y_0)&=Q(T(Y_0);Y_0)-Q(0;Y_0),\\
  S(Y_0)&=\frac{1}{T(Y_0)}\int_0^{T(Y_0)}u(t;Y_0)\dd t-1 .
\end{align}
\end{subequations}
The first row is not a diagnostic added after shooting.  By \eqref{eq:Ppoe_exact},
\begin{equation}
  \Psi_{\rm POE}^{\rm ex}(Y_0)
  =\Eperp(0,s\zeta^+)-\Eperp(0,s\zeta_0)
  =E_\Sigma(\zeta^+)-E_\Sigma(\zeta_0),
\label{eq:psi_energy_jump}
\end{equation}
where on the section
\begin{equation}
  E_\Sigma(\zeta)=\Eperp(0,s\zeta)
  =\frac12\Meff(0)\zeta^2+U(0).
\label{eq:ESigma}
\end{equation}
If \(\Meff(0)>0\) and \(\zeta>0\), then
\begin{equation}
  E_\Sigma'(\zeta)=\Meff(0)\zeta>0.
\label{eq:ESigma_monotone}
\end{equation}
Thus equality of section storage is equality of the positive crossing coordinate.  This is the scalar mechanism of the theorem.

\begin{assumption}[Regular exchange-return annulus]
\label{ass:regular_annulus}
For fixed \(\vbar\neq0\), the considered annulus in \(\Sigmaori\) has a \(C^1\) first return generated by \eqref{eq:opened_rows}; \(\Meff(\delta)>0\), the finite-speed denominators remain nonzero along the return, the initial and terminal crossings satisfy \(\zeta_0>0\) and \(\zeta^+>0\), and the speed row \(S\) is well defined.
\end{assumption}

\begin{theorem}[2SEG POE--NLM equivalence]
\label{thm:2seg_equiv}
Under Assumption~\ref{ass:regular_annulus}, a section point \(Y_0\in\Sigmaori\) represents a 2SEG NLM cycle at mean speed \(\vbar\) if and only if
\begin{equation}
  \Rex(Y_0)=0.
\label{eq:Rex_zero}
\end{equation}
Equivalently,
\begin{equation}
\begin{gathered}
  Y^+(Y_0)=Y_0\quad\hbox{with the mean-speed normalization}\\
  \Longleftrightarrow\\
  \Psi_{\rm POE}^{\rm ex}=0,\quad C_u=0,\quad C_Q=0,\quad S=0 .
\end{gathered}
\label{eq:theorem_equiv_statement}
\end{equation}
\end{theorem}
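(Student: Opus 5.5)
The proof reduces the theorem to a componentwise comparison on the oriented section. A 2SEG NLM cycle at mean speed \(\vbar\) is, by the definition \eqref{eq:nlm_family_min} specialised to the pose-reduced internal state \(z=(\delta,\eta,\sigma)\), a first-return orbit whose internal state recurs, \(z(T)=z(0)\), and whose mean forward speed equals \(\vbar\). Nonzero pose drift then follows automatically: the mean speed is nonzero, so the carrier translates. In the oriented chart \eqref{eq:oriented_vars}, the internal state at a crossing of \(\delta=0\) is encoded by \((\zeta,u,Q)\). The map \((\sigma,\nu,\qy)\mapsto(\zeta,u,Q)\) is a bijection because \(s\) is fixed. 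The yaw rate is recovered by \(\omega=hQ/u\), using the nonvanishing denominators of Assumption~\ref{ass:regular_annulus}.

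Step 1: internal recurrence equals a fixed point of the section map. Since \(Y^+\) is the first return to \(\delta=0\), the \(\delta\)-component matches by construction. Hence \(z(T)=z(0)\) holds if and only if \(Y^+(Y_0)=Y_0\). Conversely, a fixed point produces a periodic internal solution by uniqueness of solutions of the \(C^1\) vector field \eqref{eq:opened_rows}. The mean-speed normalization is exactly \(S=0\), because \(u=s\nu=|\vbar|^{-1}|v|\)-type scaling makes the mean of \(u\) equal \(1\) precisely when the mean speed is \(\vbar\). I would check this sign bookkeeping carefully, since \(\nu=hv\) and \(u=s\nu\) must combine to give \(\bar u=\bar v/\vbar\).

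Step 2: replace \(\zeta\)-return by POE closure. By \eqref{eq:Ppoe_exact}, which instantiates Proposition~\ref{prop:closed_open_energy_gen}, integrating along the return gives \eqref{eq:psi_energy_jump}, namely \(\Psi_{\rm POE}^{\rm ex}=E_\Sigma(\zeta^+)-E_\Sigma(\zeta_0)\). On the half-line \(\zeta>0\), \(E_\Sigma\) is strictly increasing by \eqref{eq:ESigma_monotone}, since \(\Meff(0)>0\). It is therefore injective, and \(\Psi_{\rm POE}^{\rm ex}=0\iff\zeta^+=\zeta_0\). The assumption that both \(\zeta_0\) and \(\zeta^+\) are positive is essential here. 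Without it the quadratic storage would also admit \(\zeta^+=-\zeta_0\), a crossing in the opposite orientation, which is excluded by the first-return definition on \(\Sigmaori\).

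Step 3: combine the rows. Fixed-point equality splits into the \(\zeta\), \(u\) and \(Q\) components, which are equivalent respectively to \(\Psi_{\rm POE}^{\rm ex}=0\), \(C_u=0\) and \(C_Q=0\). Adding \(S=0\) gives \(\Rex(Y_0)=0\iff\) the orbit is a normalized fixed point \(\iff\) it is an NLM cycle, which is \eqref{eq:theorem_equiv_statement}.

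The main obstacle is Step 1's identification of the NLM definition with the chart fixed point. This requires that \((\zeta,u,Q)\) on the section determine the full internal state, including \(\omega\). It also requires that \(T\) be the genuine internal period rather than a multiple of it. I would handle the first point through the regularity of \(u\neq0\) along the crossings. For the second, I would adopt the convention that an NLM cycle is counted by its minimal oriented return, which matches the first-return map; an orbit recurring only after several crossings is then treated as a fixed point of an iterate, not of \(Y^+\).
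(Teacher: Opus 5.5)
Your proposal is correct and follows the paper's own proof. In both arguments $C_u=C_Q=0$ and $S=0$ close the carrier and mean-speed rows, the $\delta$-coordinate returns by construction of the section, and $\Psi_{\rm POE}^{\rm ex}=E_\Sigma(\zeta^+)-E_\Sigma(\zeta_0)$ together with strict monotonicity of $E_\Sigma$ on $\zeta>0$ closes the remaining $\zeta$-coordinate.

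One small caveat concerns your aside that nonzero pose drift follows automatically from $\vbar\neq0$. This is not valid as stated: in $SE(2)$, a cycle with net yaw $2\pi$ and closed translation has $\Delta g=e$ despite nonzero mean speed. The paper's proof likewise equates an NLM cycle at mean speed $\vbar$ with the normalized section fixed point and does not certify $\Delta g\neq e$ inside the theorem, so this does not affect the equivalence you prove.
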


\begin{proof}
If \(Y_0\) is a section fixed point with the prescribed mean-speed normalization, then \(u^+=u_0\), \(Q^+=Q_0\), \(S=0\), and \(\zeta^+=\zeta_0\).  Equation~\eqref{eq:psi_energy_jump} gives \(\Psi_{\rm POE}^{\rm ex}=0\), hence \(\Rex(Y_0)=0\).

Conversely suppose \(\Rex(Y_0)=0\).  The equations \(C_u=0\) and \(C_Q=0\) give \(u^+=u_0\) and \(Q^+=Q_0\).  The row \(S=0\) fixes the mean-speed normalization.  The exchange row gives
\begin{equation}
\begin{gathered}
  \Psi_{\rm POE}^{\rm ex}=0
  \Rightarrow
  \Eperp(T;Y_0)=\Eperp(0;Y_0)\\
  \Rightarrow
  E_\Sigma(\zeta^+)=E_\Sigma(\zeta_0)
  \Rightarrow
  \zeta^+=\zeta_0,
\end{gathered}
\label{eq:energy_equal_proof}
\end{equation}
where the last implication uses strict monotonicity on the positive oriented section.  The return is already to \(\delta=0\), so all section coordinates return:
\begin{equation}
  Y^+=(0,\zeta^+,u^+,Q^+)=(0,\zeta_0,u_0,Q_0)=Y_0 .
\label{eq:section_return_proof}
\end{equation}
Thus the opened trajectory is an internal section fixed point at the prescribed mean speed.  The POE row is the missing scalar transverse return condition, not an auxiliary residual.
\end{proof}

The proof explains why the 2SEG statement does not generalize as a scalar sufficiency theorem.  If the transverse section coordinate were multidimensional, equality of one scalar storage value would constrain the return to a level set rather than identify the returned state.  The oriented scalar chart converts storage equality into coordinate equality.  This is the mathematical reason the 2SEG statement is an exact equivalence while the 3SEG statement must be modal.

\section{Numerical Realization of the 2SEG Scalar Return}
\label{sec:2seg_numerics}

The scalar computation follows the same mechanical order.  The propulsion channel is first closed, producing a one-dimensional transverse oscillator whose energy level gives a regular support.  That support is used as a chart and seed.  Certification occurs only after the yaw-propulsion channel has been opened, the exact finite-speed return has been integrated, and the exchange rows have been assembled on the returned cycle.

The numerical input is
\begin{equation}
  (\theta_{\rm phys},\vbar),\qquad \vbar\ne0,
\label{eq:numerical_input}
\end{equation}
where \(\theta_{\rm phys}=(\varepsilon,\gamma,k_2,k_4)\) fixes the dimensionless 2SEG mechanism.  The scalar potential is
\begin{equation}
  U(\delta)=\frac12k_2\delta^2+\frac14k_4\delta^4,
  \qquad U'(\delta)=k_2\delta+k_4\delta^3 .
\label{eq:potential}
\end{equation}
The storage and yaw-coupling coefficient block is evaluated at every quadrature or ODE step.  With
\begin{equation}
\begin{gathered}
  \alpha=\varepsilon,\quad \beta=1-\varepsilon,\quad \mu=1-\varepsilon,\\
  j_1=\gamma\alpha^3,\quad j_2=\gamma\beta^3,
  \quad B_{22}=j_2+\mu\beta^2,
\end{gathered}
\label{eq:design_constants}
\end{equation}
the storage coefficients are
\begin{equation}
\begin{aligned}
 B_{11}(\delta)&=j_1+j_2+\alpha^2+\mu\beta^2
 +2\mu\alpha\beta\cos\delta,\\
 B_{12}(\delta)&=j_2+\mu\beta^2+\mu\alpha\beta\cos\delta,\\
 \rho(\delta)&=\mu\beta\sin\delta,
 \qquad
 \Delta(\delta)=B_{11}(\delta)-\rho(\delta)^2.
\end{aligned}
\label{eq:storage_coefficients}
\end{equation}
The scalar transverse inertia is the Schur complement
\begin{equation}
  \Meff(\delta)=B_{22}-
  \frac{B_{12}(\delta)^2+\rho(\delta)^2[B_{11}(\delta)-2B_{12}(\delta)]}{\Delta(\delta)}.
\label{eq:meff_num}
\end{equation}
The domain acceptance conditions require
\begin{equation}
  \Delta(\delta)\ne0,
  \qquad
  \Meff(\delta)>0,
\label{eq:domain_conditions}
\end{equation}
on every sampled support and integrated opened trajectory.

The derivative of the scalar inertia and the yaw-propulsion coefficients are assembled from the same evaluator,
\begin{subequations}
\label{eq:2seg_num_aux_coeffs}
\begin{align}
  \Meff'&=-\frac{N_M'\Delta-N_M\Delta'}{\Delta^2},\\
  N_M&=B_{12}^2+\rho^2(B_{11}-2B_{12}),\\
  A_y(\delta)&=\alpha j_2+\alpha\beta^2\mu(1-\mu)-\mu\beta j_1\cos\delta,\\
  r(\delta)&=\frac{A_y(\delta)}{\Delta(\delta)},\\
  G_y(\delta)&=\alpha+\mu\beta\cos\delta .
\end{align}
\end{subequations}
These terms are not post-processing observables.  They are the coefficients used simultaneously by the closed storage quadrature, the opened finite-speed vector field, and the exact POE accumulator.

The closed-channel support is sampled through the regular phase coordinate
\begin{equation}
  \delta=A\sin\theta,
  \qquad \theta\in[0,2\pi),
\label{eq:theta_param}
\end{equation}
which avoids singular endpoint sampling at turning points.  With midpoint samples \(\theta_j=(j+1/2)\Delta\theta\),
\begin{equation}
\begin{aligned}
  \delta_j&=A\sin\theta_j,\\
  \sigma_j&=\operatorname{sgn}(\cos\theta_j)
  \left[\frac{2(\Ep-U(\delta_j))}{\Meff(\delta_j)}\right]^{1/2},\\
  \Delta t_j&=\frac{|A\cos\theta_j|\Delta\theta}{|\sigma_j|}.
\end{aligned}
\label{eq:support_discretization}
\end{equation}
The support defect retained for diagnostics is
\begin{equation}
  R_{\rm supp}(\Ep)=
  \max_j\left|\frac12\Meff(\delta_j)\sigma_j^2+U(\delta_j)-\Ep\right|.
\label{eq:R_support}
\end{equation}
The support constructs the scalar closed oscillator and provides a structured seed.  It does not certify the final locomotion cycle.

For a nonzero prescribed mean speed, the opened return is integrated in the finite-speed variables of Section~\ref{sec:2seg_theorem}.  The auxiliary rows used in the exact transport are
\begin{subequations}
\label{eq:2seg_num_aux_rows}
\begin{align}
\lambda_{\natural}
 &=\frac{G_y}{\Delta}(\nu-2h\rho\sigma),\\
\phi_{\natural}
 &=\nu\left[-\frac{B_{12}-\rho^2}{\Delta}F_\sigma
 +\frac{\rho G_y}{\Delta}\sigma^2\right],\\
E_{\rm geom}
 &=B_{11}(G_y-\alpha)+\alpha\rho^2,\\
R_{\natural}
 &=\frac{\rho(B_{11}-B_{12})F_\sigma
 +E_{\rm geom}(2\omega\sigma+\sigma^2)
 +B_{11}G_y\omega^2}{\Delta}.
\end{align}
\end{subequations}
The yaw-rate product and speed rows are
\begin{subequations}
\label{eq:2seg_num_Fq_Fnu}
\begin{align}
F_{\qy}
 &=\frac{\phi_{\natural}+h^2\nu^{-1}\qy R_{\natural}
 -\lambda_{\natural}\qy}{h},\\
F_\nu
&=\frac{ h\rho F_\sigma+h(G_y-\alpha)\sigma^2
 +h^2\rho\nu^{-1}F_{\qy}}
 {1+h^2\rho\nu^{-2}\qy} \notag\\
&\quad
+\frac{2h^2(G_y-\alpha)\nu^{-1}\qy\sigma
 +h^3G_y\nu^{-2}\qy^2}
 {1+h^2\rho\nu^{-2}\qy}.
\end{align}
\end{subequations}
In oriented time \(\vartheta=s t\), the integrated system is
\begin{equation}
  \frac{\dd Y}{\dd\vartheta}
  =\begin{bmatrix}\zeta & F_\sigma & F_\nu & F_{\qy}\end{bmatrix}^{\mathsf T}_{(\delta,s\zeta,su,sQ)} .
\label{eq:2seg_num_oriented_flow}
\end{equation}
The trajectory is rejected before residual solving if it violates \(\nu\ne0\), \(h\ne0\), or \(1+h^2\rho\nu^{-2}\qy\ne0\), or if it fails to return to the oriented section with \(\zeta>0\).

For fixed \(\vbar\), the section unknown is parametrized by
\begin{equation}
  Y_0(a,b,Q_0)=(0,e^a,e^b,Q_0),
\label{eq:solve_variables}
\end{equation}
which enforces \(\zeta_0>0\) and \(u_0>0\).  The scalar return is solved as
\begin{equation}
  (a^*,b^*,Q_0^*)=
  \arg\min_{a,b,Q_0}\frac12
  \norm{\widehat\Rex(Y_0(a,b,Q_0);\vbar)}^2,
\label{eq:least_squares_problem}
\end{equation}
where \(\widehat\Rex=W\Rex\) is an invertibly scaled residual.  The scaling improves conditioning but does not change the zero set,
\begin{equation}
  \widehat\Rex=0\quad\Longleftrightarrow\quad\Rex=0 .
\label{eq:scaled_equiv}
\end{equation}

The unscaled rows remain mechanical diagnostics.  The POE and mean-speed rows are accumulated during the event integration,
\begin{subequations}
\label{eq:2seg_num_accumulators}
\begin{align}
  \frac{\dd I_{\POE}}{\dd\vartheta}&=s\,\Ppoe^{\rm ex}(Y(\vartheta)),
  & I_{\POE}(0)&=0,\\
  \frac{\dd I_u}{\dd\vartheta}&=u(\vartheta),
  & I_u(0)&=0.
\end{align}
\end{subequations}
At the return time \(\tau\),
\begin{equation}
  \Rex(Y_0;\vbar)=
  \begin{bmatrix}
  I_{\POE}(\tau)\\
  u^+-u_0\\
  Q^+-Q_0\\
  I_u(\tau)/\tau-1
  \end{bmatrix}.
\label{eq:2seg_num_Rex_full}
\end{equation}
A typical scaling is
\begin{subequations}
\label{eq:2seg_num_scaling_rows}
\begin{align}
  W&={\rm diag}(e_{\rm sc}^{-1},u_{\rm sc}^{-1},Q_{\rm sc}^{-1},1),\\
  e_{\rm sc}&=\max(1,|I_{\POE}|,|\Delta\Eperp|),\\
  u_{\rm sc}&=\max(1,|u_0|),\qquad Q_{\rm sc}=\max(1,|Q_0|).
\end{align}
\end{subequations}
The weighted solve is thus a conditioning device; the certificate is the unscaled exchange-return residual.

The accepted orbit is
\begin{equation}
  \Gamma_{\vbar}=\{Y(\vartheta;Y_0^*,\vbar):0\le\vartheta\le\tau(Y_0^*)\}.
\label{eq:accepted_orbit}
\end{equation}

The acceptance conditions are separated by role:
\begin{equation}
\begin{aligned}
\mathcal G_{\rm dom}:&\quad
\Delta\ne0,\ \Meff>0,\ \nu\ne0,\ 1+h^2\rho\nu^{-2}\qy\ne0,\\
\mathcal G_{\rm sec}:&\quad
\tau<\infty,\ \delta(\tau)=0,\ \zeta(\tau)>0,\\
\mathcal G_{\rm res}:&\quad
\norm{\widehat\Rex(Y_0^*;\vbar)}\le\tau_{\rm ex},\\
\mathcal G_{\rm diag}:&\quad
\mathbf r_{\rm phys}\ \text{reported.}
\end{aligned}
\label{eq:condition_table_2seg}
\end{equation}
Continuation in mean speed uses a solved point only as a predictor; validity is not continued.  Each new speed is re-certified by the exact opened residual.  After internal acceptance, the pose is reconstructed as an output,
\begin{equation}
  \dot g=g\,\xi(Y(t);\theta_{\rm phys},\vbar),
  \qquad
  \Delta g_{\vbar}=g(\tau)g(0)^{-1}.
\label{eq:pose_reconstruction_2seg}
\end{equation}

\begin{construction}[Exact scalar 2SEG numerical realization]
For fixed \(\theta_{\rm phys}\) and \(\vbar\ne0\): evaluate storage, yaw, and exchange coefficients; build the carrier-closed chart; convert a structured seed to oriented finite-speed variables; integrate the opened flow to the next positive section event while accumulating POE and mean-speed integrals; assemble \(\Rex\), solve the scaled problem, and retain unscaled diagnostics; accept only if domain, section, and residual conditions all hold; warm-start in \(\vbar\) only as prediction and re-certify every reported point.
\end{construction}

The reported observables are also separated by role:
\begin{equation}
\begin{gathered}
\text{construction: } A,\ \Ep,\ T_L,\ J_L,\ Y_0^*,\ \tau,\\
\text{certificate: } I_{\POE},\ C_u,\ C_Q,\ S,\ \norm{\widehat\Rex},\ \mathbf r_{\rm phys},\\
\text{locomotion: } \vbar,\ \Delta g_{\vbar}.
\end{gathered}
\label{eq:2seg_num_observables}
\end{equation}
Construction coordinates describe the closed oscillator and the section point; certificate rows accept or reject the opened return; locomotion observables are reported only after the internal exchange-return certificate closes.

The rows are ordered by the closure they enforce.  The carrier rows close the opened yaw-propulsion variables, the mean-speed row fixes the normalized mean speed, and the POE row closes the last scalar transverse storage coordinate.  The result is the exact one-effective-coordinate numerical realization of the scalar natural-locomotion branch.

\section{3SEG Mechanical Model and Transverse Split}
\label{sec:3seg_model_primer}

The 3SEG realization uses a planar pendulum-driven car with a serial two-joint internal oscillator.  Body 1 is the nonholonomic base/carrier, body 2 is the first internal link, and body 3 is the second internal link.  The configuration is
\begin{equation}
  q=(x,y,\theta,\delta_1,\delta_2),
  \qquad \sigma_i=\dot\delta_i .
\label{eq:3seg_config_model}
\end{equation}
The pair \((x,y,\theta)\) is the carrier pose, \(\delta_1,\delta_2\) are internal yaw angles, and \(\sigma_1,\sigma_2\) are their rates.
The no-side-slip constraint at the contact point is
\begin{equation}
  A(q)\dot q=0,
  \qquad
  A(q)=(-\sin\theta,\cos\theta,0,0,0).
\label{eq:3seg_pfaff_model}
\end{equation}
A basis of admissible velocities gives
\begin{equation}
  \dot q=N(q)\nu,
  \qquad
  \nu=(v,\omega,\sigma_1,\sigma_2)^{\mathsf T},
\label{eq:3seg_kernel_model}
\end{equation}
so that reconstruction is
\begin{equation}
  \dot x=v\cos\theta,
  \quad \dot y=v\sin\theta,
  \quad \dot\theta=\omega,
  \quad \dot\delta_i=\sigma_i .
\label{eq:3seg_reconstruction_model}
\end{equation}
The physical parameters used by the model are summarized before the formulas because the same symbols later define the fixed architecture \(\thetaStar\).
\begin{table}[!t]
\caption{Reader-facing 3SEG parameter definitions.  The model is strictly conservative when the torque inputs \(u_1,u_2\) are set to zero.}
\label{tab:3seg_parameter_definitions}
\centering
\scriptsize
\setlength{\tabcolsep}{2.2pt}
\renewcommand{\arraystretch}{1.06}
\begin{tabularx}{\columnwidth}{@{}p{0.25\columnwidth}X@{}}
\toprule
Symbol & Mechanical meaning \\
\midrule
\(m_i,I_i\) & Mass and planar center-of-mass inertia of body \(i\in\{1,2,3\}\). \\
\(l_1\) & Distance from contact point to the base center of mass and first joint along the base axis. \\
\(L_2,L_3\) & Internal link lengths; the slender-link convention uses center-of-mass offsets \(l_2=L_2/2\) and \(l_3=L_3/2\). \\
\(k_{2,i},k_{4,i}\) & Quadratic and quartic passive torsional stiffnesses at internal joint \(i\). \\
\(k_{12}\) & Passive coupling stiffness between the two internal angles. \\
\(u_1,u_2\) & Generalized torque inputs associated with \(\delta_1,\delta_2\); the conservative certificates use \(u_1=u_2=0\), not tuned offsets. \\
\bottomrule
\end{tabularx}
\end{table}

The conservative internal potential is
\begin{equation}
\begin{aligned}
V(\delta_1,\delta_2)
&=\tfrac12 k_{2,1}\delta_1^2+\tfrac14 k_{4,1}\delta_1^4
+\tfrac12 k_{2,2}\delta_2^2+\tfrac14 k_{4,2}\delta_2^4\\
&\quad +\tfrac12 k_{12}(\delta_1-\delta_2)^2 .
\end{aligned}
\label{eq:3seg_potential_model}
\end{equation}
After kernel projection of the Lagrange--d'Alembert equations, the reduced dynamics is
\begin{equation}
  \bar M(\delta)\dot\nu+\bar h(\delta,\nu)+\bar g(\delta)=0,
  \qquad
  \bar g=(0,0,\partial_{\delta_1}V,\partial_{\delta_2}V)^{\mathsf T},
\label{eq:3seg_reduced_model}
\end{equation}
with conservative energy
\begin{equation}
  E(\delta,\nu)=\tfrac12\nu^{\mathsf T}\bar M(\delta)\nu+V(\delta),
  \qquad \dot E=0 .
\label{eq:3seg_energy_model}
\end{equation}
The carrier velocities are \((v,\omega)\) and the shape velocities are \((\sigma_1,\sigma_2)\).  Therefore \(\bar M\) is partitioned as
\begin{equation}
\bar M(\delta)=
\begin{bmatrix}
M_{GG}(\delta)&M_{GS}(\delta)\\
M_{SG}(\delta)&M_{SS}(\delta)
\end{bmatrix},
\label{eq:3seg_block_model}
\end{equation}
leading to the closed-channel Schur complement
\begin{equation}
  \Mperp=M_{SS}-M_{SG}M_{GG}^{-1}M_{GS},
  \qquad
  \eta_\perp=-M_{GG}^{-1}M_{GS}\sigma .
\label{eq:3seg_schur_model}
\end{equation}
The Schur complement removes the carrier momentum contribution from the energy while retaining the shape-dependent inertia that the internal links feel.  This is the mechanical origin of the transverse oscillator, the in-phase/anti-phase modal supports, and the final POE row.  The frozen passive design vector used in the same-physical result is
\begin{equation}
\begin{aligned}
\phys=(&I_1,I_2,I_3,l_1,L_2,L_3,m_1,m_2,m_3,\\
&k_{12},k_{2,1},k_{2,2},k_{4,1},k_{4,2}).
\end{aligned}
\label{eq:3seg_phys_vector_model}
\end{equation}
The conservative opened-cycle certificates are evaluated with \(u_1=u_2=0\).  These \(u_i\) are torque inputs, not passive offsets and not family-specific tuning variables.

\section{3SEG Modal Numerical Realization}
\label{sec:3seg_realization}

\subsection{Candidate Objects and Final Certificates}
\label{subsec:3seg_candidate_objects}

This section defines the numerical objects and separates candidate constructions from final mechanical NLM certificates.  The 3SEG construction follows the same close/open principle as the scalar case, but the closed object is no longer selected by one energy level.  The carrier-closed system has two effective internal shape directions, so the computation has two roles: it must generate enough nonlinear transverse modal candidates to expose IP/AP structure, and it must accept only the opened reconstructed cycles that satisfy the NLM certificate.  Support solves, collocation, multiple shooting, continuation charts, candidate mobility tests, covers, and time reversal are candidate-generation tools.  A candidate mobility test checks whether a candidate can be transported toward a nonzero-displacement continuation; the reported point is the reconstructed cycle that closes the final certificate.

For a fixed physical vector,
\begin{equation}
\begin{aligned}
\phys &\longrightarrow f_{\rm int}
\longrightarrow f_\perp
\longrightarrow (u_{\IP},u_{\AP})
\longrightarrow \gamma_{\perp,k}\\
&\longrightarrow z_k
\longrightarrow \Gamma_{k,\vbar}
\longrightarrow \calC_k^{\rm fin}(\Gamma_{k,\vbar};\phys)
\longrightarrow \Delta g_{k,\vbar},
\end{aligned}
\label{eq:3seg_chain}
\end{equation}
with
\begin{equation}
 k\in\calK_{\rm 3SEG}:=\{\IP,\AP\}.
\label{eq:sector_set}
\end{equation}
The label \(k\) denotes a nonlinear modal sector of the closed transverse oscillator; it is not a continuation-table label and not a full-state nonlinear normal mode.

Candidate equations and final certificates are distinct.  A chart \(c\in\calC_k^{\rm chart}\) has unknowns \(x_k^{(c)}\) and residual
\begin{equation}
\calR_k^{{\rm cand},c}(x_k^{(c)};\phys)=0.
\label{eq:candidate_residual}
\end{equation}
Its scaled form,
\begin{equation}
\widehat{\calR}_k^{{\rm cand},c}=W_k^{(c)}\calR_k^{{\rm cand},c},
\label{eq:candidate_scaling}
\end{equation}
conditions the chart solve.  A chart residual may contain phase gauges, speed targets, modal floor rows, branch-tangent rows, or homotopy rows.  These rows define a representation of a candidate; they are not by themselves mechanical acceptance conditions.

After reconstruction, the certificate is separated into representation checks and mechanical checks.  Here and below, BVP abbreviates boundary-value problem: the corresponding row measures the consistency of a chosen discrete cycle representation, not the continuous mechanical dynamics.  Representation rows verify that the selected numerical representation is internally consistent,
\begin{equation}
\calC_k^{{\rm rep},c}=\bigl(R_{\rm ph}^{(c)},R_{\rm ms}^{(c)},R_{\rm BVP}^{(c)},R_{\vbar}^{(c)},R_{\rm phys}^{(c)}\bigr)\transpose,
\label{eq:rep_certificate}
\end{equation}
with chart-dependent rows omitted when they are not active.  The mechanical final certificate is evaluated from the reconstructed cycle,
\begin{equation}
\calC_k^{\rm mech}(\Gamma)=
\bigl(R_{\rm supp}^{\rm fin},R_{\rm id}^{\rm fin},R_{\rm car}^{\rm fin},R_z^{\rm fin},R_{\rm rhs}^{\rm fin},R_{\rm POE}^{\rm fin},R_g^{\rm fin}\bigr)\transpose.
\label{eq:mech_certificate}
\end{equation}
The reported certificate is
\begin{equation}
\calC_k^{\rm fin}=\begin{bmatrix}
\calC_k^{{\rm rep},c}\\
\calC_k^{\rm mech}
\end{bmatrix},
\label{eq:final_certificate_tuple}
\end{equation}
but the mechanical claims use \eqref{eq:mech_certificate}.  The support-return row may be evaluated either on the transverse support or on the reconstructed projection,
\begin{equation}
R_{\rm supp}^{\rm fin}=
\left\|D_y^{-1}\bigl(\Pi_y\Gamma(T)-\Pi_y\Gamma(0)\bigr)\right\|,
\label{eq:Rsupp_final}
\end{equation}
with \(\Pi_y z=(r,\sigma)\).  The nonzero locomotion row is
\begin{equation}
R_g^{\rm fin}=\max\{0,d_{\min}-d_g(\Delta g,e)\}.
\label{eq:Rg_final}
\end{equation}
Scaled final rows may be used for acceptance,
\begin{equation}
\widehat{\calC}_k^{\rm fin}=W_k^{\rm fin}\calC_k^{\rm fin},
\qquad
\widehat{\calC}_k^{\rm mech}=W_k^{\rm mech}\calC_k^{\rm mech},
\label{eq:final_scalings}
\end{equation}
where the scaling matrices are nonsingular on active rows.  Thus scaling improves conditioning but does not alter the zero set,
\begin{equation}
\widehat{\calC}_k^{\rm mech}=0
\quad\Longleftrightarrow\quad
\calC_k^{\rm mech}=0.
\label{eq:scaling_equivalence}
\end{equation}
Acceptance is row-wise and/or normed,
\begin{equation}
R_i^{\rm fin}\le \tau_i,
\qquad
\norm{\widehat{\calC}_k^{\rm fin}}\le\tau_k,
\label{eq:acceptance}
\end{equation}
while unscaled rows remain the mechanical diagnostics.  Speed and physical-parameter rows are conditional representation rows.  In speed-target charts,
\begin{equation}
R_{\vbar}^{(c)}=\frac{\vbar(\Gamma)-\vbar_{\rm tar}}{v_{\rm sc}},
\label{eq:Rbarv_conditional}
\end{equation}
whereas charts that only report speed omit the row and recompute \(\vbar(\Gamma)\) after certification.  In physical homotopies,
\begin{equation}
R_{\rm phys}^{(c)}=\norm{(\phys-\theta_{\rm tar})/\theta_{\rm sc}},
\label{eq:Rphys_conditional}
\end{equation}
whereas fixed-physical charts take \(\phys\) as prescribed.

\subsection{Closed Transverse Supports and Modal Gauges}
\label{sec:closed_supports}

This section constructs the IP/AP modal vocabulary in the carrier-closed internal system.  These supports organize the internal oscillator, but they do not yet assert locomotion because the propulsive carrier is closed.

The reference dynamics is the six-state internal field
\begin{equation}
z=(\delta_1,\delta_2,v,\omega,\sigma_1,\sigma_2),
\qquad
\dot z=f_{\rm int}(z;\phys).
\label{eq:rhs_internal_consistency}
\end{equation}
The transverse oscillator is obtained by closing the carrier.  With \(y=(r,\sigma)\), \(\eta=(v,\omega)\), and
\begin{equation}
M(r)=
\begin{bmatrix}
M_{GG}(r)&M_{GS}(r)\\
M_{SG}(r)&M_{SS}(r)
\end{bmatrix},
\label{eq:mass_split}
\end{equation}
define
\begin{subequations}
\label{eq:schur_layer}
\begin{align}
A(r)&=M_{GG}^{-1}(r)M_{GS}(r),\\
\eta_\perp&=-A(r)\sigma,\\
\Mperp(r)&=M_{SS}-M_{SG}M_{GG}^{-1}M_{GS},\\
\Eperp(r,\sigma)&=\frac12\sigma\transpose\Mperp(r)\sigma+U(r).
\end{align}
\end{subequations}
The carrier-closed field is
\begin{equation}
\dot y=f_\perp(y;\phys),
\qquad
\frac{\dd}{\dd t}\Eperp(y(t))=0,
\label{eq:transverse_field}
\end{equation}
and accepted supports remain inside the regular transverse domain,
\begin{equation}
\lambda_{\min}\bigl(\Mperp(r(t))\bigr)\ge m_{\min}>0.
\label{eq:Mperp_pd}
\end{equation}

The IP/AP parents are initialized by the transverse linearization
\begin{equation}
K_0u_j=\Omega_j^2M_0u_j,
\qquad
u_i\transpose M_0u_j=\delta_{ij},
\qquad j\in\{\IP,\AP\},
\label{eq:parent_modes}
\end{equation}
where \(M_0=\Mperp(0)\), \(K_0=D^2U(0)\).  Modal coordinates are
\begin{equation}
Q_j=u_j\transpose M_0r,
\qquad
P_j=u_j\transpose M_0\sigma,
\qquad
\widetilde P_j=P_j/\Omega_j.
\label{eq:modal_coords}
\end{equation}
A pure small-amplitude seed uses
\begin{equation}
r_0=A_j u_j,
\qquad
\sigma_0=0,
\qquad
T_0=2\pi/\Omega_j.
\label{eq:spectral_seed}
\end{equation}
The finite-amplitude support is then solved as a periodic orbit of \(f_\perp\):
\begin{equation}
F_{\rm supp}(y_0,T)=\Phi_\perp^T(y_0;\phys)-y_0=0.
\label{eq:support_periodicity}
\end{equation}
Because the orbit is phase-invariant, a gauge is required.  A tangent gauge is
\begin{equation}
\Phi_{\rm ph}^{\rm tan}(y_0)=
\left\langle y_0-y_{\rm ref},\dot y_{\rm ref}\right\rangle_{D_y^{-2}}=0.
\label{eq:tangent_phase}
\end{equation}
A modal gauge for sector \(j\) is
\begin{equation}
P_j(y_0)=0,
\qquad
Q_j(y_0)>0,
\label{eq:modal_phase}
\end{equation}
and a seed-tracking gauge is
\begin{equation}
\Phi_{\rm ph}^{\rm seed}(y_0,\phi)=
\frac{\langle y_0-y_{\rm ref}(\phi),\dot y_{\rm ref}(\phi)\rangle}
{\norm{\dot y_{\rm ref}(\phi)}^2+\varepsilon_{\rm ph}}=0.
\label{eq:seed_phase}
\end{equation}
Pure transverse supports use these transverse gauges.  Internal/BVP charts may instead use a blended gauge after the support has been lifted:
\begin{equation}
\begin{aligned}
\Phi_{\rm ph}^{(j)}(z_0,\phi)={}&
\alpha\frac{v_0-v_{\rm ref}(\phi)}{\max(|v_{\rm ref}(\phi)|,v_{\rm sc})}\\
&+(1-\alpha)\frac{P_j(z_0)-P_j(z_{\rm ref}(\phi))}{\max(|P_j(z_{\rm ref}(\phi))|,p_{\rm sc})}=0.
\end{aligned}
\label{eq:blended_phase}
\end{equation}

A pure support solve is
\begin{equation}
\calR_{\perp,j}^{\rm pure}(y_0,T,A_j)=
\begin{bmatrix}
D_y^{-1}F_{\rm supp}(y_0,T)\\[0.15em]
P_j(y_0)/p_{\rm sc}\\[0.15em]
(Q_j(y_0)-A_j)/q_{\rm sc}
\end{bmatrix}=0.
\label{eq:pure_support}
\end{equation}
A mixed chart activates only the modal rows required by that chart.  Let
\begin{equation}
m(y_0)=(Q_{\IP},P_{\IP},Q_{\AP},P_{\AP})(y_0),
\label{eq:modal_row}
\end{equation}
and let \(S_m^{(k,c)}\) select the active components in chart \(c\).  The vector \(\mu\) collects free targets or continuation parameters, so the system may be square or solved in scaled least squares without changing the final certificate.
\begin{equation}
\calR_{\perp,k}^{\rm mix,c}(y_0,T,\mu)=
\begin{bmatrix}
D_y^{-1}F_{\rm supp}(y_0,T)\\[0.15em]
D_m^{-1}S_m^{(k,c)}\bigl(m(y_0)-m_k^{\rm tar}(\mu)\bigr)
\end{bmatrix}=0.
\label{eq:mixed_support}
\end{equation}
Support diagnostics are reported unscaled:
\begin{subequations}
\label{eq:support_diags}
\begin{align}
R_{\rm supp}&=\norm{\Phi_\perp^{T_k}(y_{0,k})-y_{0,k}},\\
R_E&=\max_t|\Eperp(y_k(t))-\Eperp(y_k(0))|,\\
R_{\rm ph}&=|\Phi_{\rm ph}(y_{0,k})|.
\end{align}
\end{subequations}

Modal chart rows are not final modal identity.  Identity is recomputed from the reconstructed cycle.  Activity shares are
\begin{subequations}
\label{eq:modal_activity}
\begin{align}
A_j^2(\Gamma)&=\frac1T\int_0^T(Q_j(t)^2+\widetilde P_j(t)^2)\,\dd t,\\
\rho_j(\Gamma)&=\frac{A_j^2}{A_{\IP}^2+A_{\AP}^2+\varepsilon_{\rm id}},
\qquad \varepsilon_{\rm id}>0.
\end{align}
\end{subequations}
A relative phase feature may be computed as
\begin{equation}
\phi_{\rm rel}=\arg\left(\int_0^T(Q_{\IP}+i\widetilde P_{\IP})(Q_{\AP}-i\widetilde P_{\AP})\,\dd t\right).
\label{eq:rel_phase}
\end{equation}
The final identity row is a distance in a feature space containing activity shares, phase/sign, and correlation information,
\begin{equation}
R_{\rm id}^{(k)}=\calD_{\rm id}^{(k)}(A_{\IP},A_{\AP},\rho_{\IP},\rho_{\AP},\phi_{\rm rel},c_{\rm corr},s_{\rm sign}),
\label{eq:id_condition}
\end{equation}
and sector identity is accepted when \(R_{\rm id}^{(k)}\le\tau_{\rm id}^{(k)}\).

\subsection{Carrier Lift, BVP Representations, and Continuation Charts}
\label{sec:lift_charts}

This section describes the numerical representations used to transport and polish candidates after the carrier degrees of freedom are reopened.  The rows below are numerical handles for candidate motion and conditioning; they are not mechanical claims until the reconstructed cycle is recertified.

The opened internal lift of a transverse support is
\begin{equation}
\begin{aligned}
z_k(t)&=(r_k(t),\eta_k(t),\sigma_k(t)),\\
\eta_k(t)&=-A(r_k(t))\sigma_k(t)+\qck(t),
\end{aligned}
\label{eq:z_lift}
\end{equation}
with carrier dynamics represented by
\begin{equation}
\dot\qck=f_c(y_k(t),\qck(t);\phys).
\label{eq:carrier_rhs}
\end{equation}
The carrier row is
\begin{equation}
R_{\rm car}^{\rm fin}=\norm{\qck(T_k)-\qck(0)}_{D_q^{-1}}.
\label{eq:Rcar}
\end{equation}
A chart-level support-carrier residual is
\begin{equation}
\calR_k^{{\rm cand},c}(x_k^{\rm sc})=
\begin{bmatrix}
D_y^{-1}(y(T_k)-y_{0,k})\\[0.15em]
D_q^{-1}(\qck(T_k)-q_{0,k})\\[0.15em]
R_{\vbar}^{(c)}(\Gamma_k)\\[0.15em]
R_{\rm mod}^{(k,c)}(y_{0,k},\Gamma_k)\\[0.15em]
R_{\rm POE}^{(c)}(\Gamma_k)
\end{bmatrix}.
\label{eq:support_carrier_cand}
\end{equation}
The rows in \(\calR_k^{{\rm cand},c}\) generate candidates.  The final certificate recomputes modal identity, POE, carrier return, and internal return on the reconstructed cycle.

Multiple shooting uses nodes \(Z=(z_0,\ldots,z_{m-1})\) and durations \(\Delta t_i\):
\begin{equation}
\begin{gathered}
\calF_{\rm ms}(Z,T)=
\begin{bmatrix}
D_z^{-1}(\Phi_{\rm int}^{\Delta t_0}(z_0)-z_1)\\
\vdots\\
D_z^{-1}(\Phi_{\rm int}^{\Delta t_{m-1}}(z_{m-1})-z_0)\\
\Phi_{\rm ph}(z_0)
\end{bmatrix},\\
R_{\rm ms}=\norm{\calF_{\rm ms}}.
\end{gathered}
\label{eq:multishoot}
\end{equation}
Hermite--Simpson collocation uses, on each mesh interval,
\begin{subequations}
\label{eq:HS}
\begin{align}
z_{i+1/2}&=\frac12(z_i+z_{i+1})+\frac{\Delta t_i}{8}(f_i-f_{i+1}),\\
f_{i+1/2}&=f_{\rm int}(z_{i+1/2};\phys),\\
H_i&=z_{i+1}-z_i-\frac{\Delta t_i}{6}(f_i+4f_{i+1/2}+f_{i+1})=0.
\end{align}
\end{subequations}
The BVP residual is evaluated on the native candidate representation--multiple-shooting nodes, collocation mesh, Hermite--Simpson polynomial, or controlled reconstructed mesh:
\begin{equation}
\begin{aligned}
\calF_{\rm BVP}&=(H_0,\ldots,H_{N-1},z_N-z_0,\Phi_{\rm ph},\ldots),\\
R_{\rm BVP}^{(c)}&=\norm{\calF_{\rm BVP}}.
\end{aligned}
\label{eq:BVP}
\end{equation}
Thus \(R_{\rm BVP}^{(c)}\) is the discrete representation residual.  The right-hand-side dynamics check is separate: after the candidate has been represented as a reconstructed cycle \(z_{\rm rep}(t)\),
\begin{equation}
R_{\rm rhs}^{\rm fin}=\left\|\dot z_{\rm rep}(t)-f_{\rm int}(z_{\rm rep}(t);\phys)\right\|_\infty,
\label{eq:rhs}
\end{equation}
where \(\dot z_{\rm rep}\) is the polynomial derivative, segment derivative, or controlled-reintegration derivative appropriate to the representation.

Continuation transports candidates through sector-dependent charts.  With unknown vector \(w\), continuation parameter \(\lambda\), and chart \(c\),
\begin{equation}
\begin{bmatrix}
\calF_{\rm cont}^{(k,c)}(w,\lambda)\\[0.15em]
\langle (w,\lambda)-(w_0,\lambda_0),(\dot w_0,\dot\lambda_0)\rangle-\Delta s
\end{bmatrix}=0.
\label{eq:pseudo_arclength}
\end{equation}
Representative active rows are:
\begin{subequations}
\label{eq:chart_rows}
\begin{align}
R_{\vbar}^{\rm tar}&=\frac{\vbar(\Gamma)-\vbar_{\rm tar}}{v_{\rm sc}},\qquad
\vbar(\Gamma)=\frac1T\int_0^T v(t)\,\dd t,\\
R_{\rm POE}^{\rm chart}&=\frac{\Psipoe(\Gamma)}{\psi_{\rm sc}},\\
R_{\rm br}&=\frac{\langle w-w_{\rm br}(\phi),\tau_{\rm br}(\phi)\rangle}{\norm{\tau_{\rm br}(\phi)}^2+\varepsilon_{\rm br}}-\lambda_{\rm br},\\
A_{\rm nonv}(\Gamma)&=\left(\frac1T\int_0^T\norm{P_{\rm nonv}z(t)}^2\,\dd t\right)^{1/2},\\
R_{\rm act}&=\frac{A_{\rm nonv}(\Gamma)-A_{\rm floor}}{a_{\rm sc}},\\
R_{\rm floor}^{(k)}&=\frac{A_k(\Gamma)-A_{\rm floor}^{(k)}}{a_{\rm sc}^{(k)}}.
\end{align}
\end{subequations}
The main chart families are summarized in Table~\ref{tab:chart_rows}.  These rows are numerical handles for transporting candidates through difficult regions.  They can constrain, monitor, or parameterize a candidate chart; final acceptance still uses the reconstructed-cycle certificate.
\begin{table}[t]
\caption{Candidate continuation charts and active rows.}
\label{tab:chart_rows}
\centering
\footnotesize
\begin{tabularx}{\columnwidth}{@{}l l X@{}}
\toprule
Chart & Active row(s) & Numerical role\\
\midrule
mean-speed & \(R_{\vbar}^{\rm tar}\) & target or transport prescribed speed\\
POE-constrained & \(R_{\rm POE}^{\rm chart}\) & constrain POE during candidate generation\\
branch tangent & \(R_{\rm br}\) & parameterize a local branch direction\\
non-\(v\) activity & \(R_{\rm act}\) & avoid inactive candidate regions\\
modal floor & \(R_{\rm floor}^{(k)}\) & maintain sector activity\\
secant & \(R_{\rm sec}^{v}\), \(R_{\rm sec}^{\rm nonv}\) & connect known candidate states\\
physical homotopy & \(R_{\rm phys}^{(c)}\) & move search toward \(\theta_*\)\\
\bottomrule
\end{tabularx}
\end{table}
Secant continuation may use
\begin{equation}
\bar v_{\rm sec}(\lambda)=\bar v_0+\lambda\Delta\bar v,
\qquad
A_{{\rm nonv},{\rm sec}}(\lambda)=A_{{\rm nonv},0}+\lambda\Delta A_{\rm nonv},
\label{eq:secant_rows}
\end{equation}
with residuals \(\bar v(\Gamma)-\bar v_{\rm sec}(\lambda)\) or \(A_{\rm nonv}(\Gamma)-A_{{\rm nonv},{\rm sec}}(\lambda)\).  A physical homotopy chart uses
\begin{equation}
\phys(\alpha)=(1-\alpha)\theta_{\rm src}+\alpha\theta_*,
\qquad \alpha\in[0,1],
\label{eq:physical_homotopy}
\end{equation}
to locate candidates; reported same-physical branches are certified only at \(\phys=\theta_*\).  The equations in this section describe chart rows, not final acceptance rows.

\subsection{Candidate Transformations, Final Acceptance Conditions, and Observables}
\label{sec:final_conditions_obs}

This section states which reconstructed cycles are finally accepted and which observables are reported.  Candidate transformations may expose useful branches, but every reported quantity is recomputed after projection to the accepted opened cycle.

The AP sector may be represented on an auxiliary AP period cover with cover multiplicity \(m_{\rm cov}\):
\begin{subequations}
\label{eq:ap_cover}
\begin{align}
\pi_{\rm cov}&:\widetilde\Gamma_{\AP}\to\Gamma_{\AP},\\
\widetilde\Gamma_{\AP}(t+\widetilde T)&=\widetilde\Gamma_{\AP}(t),\\
T_{\AP}&=\widetilde T/m_{\rm cov},\qquad m_{\rm cov}\in\mathbb N,\\
\Gamma_{\AP}(t)&=\pi_{\rm cov}\widetilde\Gamma_{\AP}(t).
\end{align}
\end{subequations}
Cover-level checks \(\calC_{\AP}^{\rm cov}(\widetilde\Gamma_{\AP})\) may check this auxiliary representation; the reported physical AP cycle is accepted by \(\calC_{\AP}^{\rm fin}(\Gamma_{\AP})\).  The cover is a candidate representation used to expose AP mobility during search; it is not the reported physical cycle.  If the cover is used only to generate a candidate, all reported observables are recomputed on \(\Gamma_{\AP}\).  Time reversal is a candidate map
\begin{equation}
(\calT\Gamma)(t)=\calS_{\rm tr}\Gamma(T-t),
\label{eq:time_reversal}
\end{equation}
where \(\calS_{\rm tr}\) is the model involution that flips the velocity-like components.  Its defining compatibility is
\begin{equation}
D\calS_{\rm tr}(z)\,f_{\rm int}(z;\phys)=-f_{\rm int}(\calS_{\rm tr}z;\phys).
\label{eq:time_reversal_involution}
\end{equation}
A transformed candidate is accepted only if \(\calC_k^{\rm fin}(\calT\Gamma)\) closes.

Candidate mobility probes are normalized scoring maps for candidate regions.  They test whether a candidate can be transported toward nonzero-displacement continuations before any final certificate is asserted:
\begin{equation}
\calB(\phys)=(b_{\rm supp},b_{\rm id},b_{\rm lift},b_{\rm POE},b_{\rm mob}),
\label{eq:candidate_mobility_screen}
\end{equation}
with representative, not unique, normalized components
\begin{equation}
\begin{aligned}
b_{\rm supp}&=e^{-R_{\rm supp}/s_{\rm supp}},\qquad
b_{\rm POE}=e^{-R_{\rm POE}/s_{\rm POE}},\\
b_{\rm mob}&=\mathbf 1_{\{d_g(\Delta g,e)\ge d_{\min}\}}.
\end{aligned}
\label{eq:candidate_mobility_scores}
\end{equation}
Large scores select regions to polish or continue; they do not replace the final certificate.

The POE row is evaluated on the opened reconstructed cycle:
\begin{subequations}
\label{eq:poe}
\begin{align}
\frac{\dd}{\dd t}\Eperp(r(t),\sigma(t))&=\Ppoe(t),\\
\Psipoe(\Gamma_k)&=\frac1{T_k}\int_0^{T_k}\Ppoe(t)\,\dd t,\\
R_{\rm POE}^{\rm fin}&=|\Psipoe(\Gamma_k)|.
\end{align}
\end{subequations}
Candidate charts may be POE-constrained, POE-checked, or POE-free during search; a reported natural locomotion point is POE-certified.  Internal return is
\begin{equation}
R_z^{\rm fin}=\norm{\Pi_z\Gamma_k(T_k)-\Pi_z\Gamma_k(0)}.
\label{eq:Rz}
\end{equation}
Only after the internal certificate closes is the pose reconstructed:
\begin{equation}
\dot g=g\,\xi(z(t);\phys),
\qquad
\Delta g_k=g(T_k)g(0)^{-1},
\label{eq:pose}
\end{equation}
with nonzero locomotion condition \eqref{eq:Rg_final}.

The same-physical IP/AP certificate freezes the physical vector and pairs the final certificates:
\begin{equation}
\calC_{\rm pair}=
\begin{bmatrix}
\widehat{\calC}_{\IP}^{\rm fin}(\Gamma_{\IP};\theta_*)\\[0.15em]
\widehat{\calC}_{\AP}^{\rm fin}(\Gamma_{\AP};\theta_*)\\[0.15em]
(\theta_{\IP}-\theta_*)/\theta_{\rm sc}\\[0.15em]
(\theta_{\AP}-\theta_*)/\theta_{\rm sc}
\end{bmatrix},
\qquad
\norm{\calC_{\rm pair}}\le\tau_{\rm pair}.
\label{eq:paired_certificate}
\end{equation}
Together with nonzero displacement,
\begin{equation}
d_g(\Delta g_{\IP},e)\ge d_{\min},
\qquad
 d_g(\Delta g_{\AP},e)\ge d_{\min},
\label{eq:paired_displacement}
\end{equation}
this gives \(\calN_{\IP}(\theta_*)\ne\varnothing\) and \(\calN_{\AP}(\theta_*)\ne\varnothing\).

The acceptance-condition families separate representation validity from mechanical acceptance:
\begin{subequations}
\label{eq:condition_groups}
\begin{align}
\calG_{\rm dom}:&\quad \lambda_{\min}(\Mperp(r(t)))\ge m_{\min},\ T_k>0,
\ \phys\in\calD_{\rm phys},\\
\calG_{\rm rep}:&\quad R_{\rm ph}^{(c)},\ R_{\rm ms}^{(c)},\ R_{\rm BVP}^{(c)},\ R_{\vbar}^{(c)},\ R_{\rm phys}^{(c)},\\
\calG_{\rm mech}:&\quad R_{\rm supp}^{\rm fin},\ R_{\rm id}^{\rm fin},\ R_{\rm car}^{\rm fin},\ R_z^{\rm fin},\ R_{\rm rhs}^{\rm fin},\\
\calG_{\rm POE}:&\quad R_{\rm POE}^{\rm fin},\\
\calG_{\rm phys}:&\quad \calC_{\rm pair},\ R_g^{\rm fin}.
\end{align}
\end{subequations}

All reported observables are recomputed from accepted cycles.  Modal observables are
\begin{equation}
\calO_{\rm modal}=(T_k,A_{\IP},A_{\AP},\rho_{\IP},\rho_{\AP},E_{\perp,\min},E_{\perp,\max}),
\label{eq:modal_obs}
\end{equation}
where
\begin{equation}
\begin{aligned}
E_{\perp,\min}&=\min_t\Eperp(r(t),\sigma(t)),\\
E_{\perp,\max}&=\max_t\Eperp(r(t),\sigma(t)).
\end{aligned}
\label{eq:Eperp_range}
\end{equation}
Certificate observables are the active representation diagnostics and the unscaled mechanical final rows,
\begin{equation}
\begin{aligned}
\calO_{\rm cert}=(&\calC_k^{\rm rep},\ R_{\rm supp}^{\rm fin},R_{\rm id}^{\rm fin},R_{\rm car}^{\rm fin},R_z^{\rm fin},\\
&R_{\rm rhs}^{\rm fin},R_{\rm POE}^{\rm fin},R_g^{\rm fin}).
\end{aligned}
\label{eq:cert_obs}
\end{equation}
and locomotion observables are
\begin{equation}
\begin{aligned}
\calO_{\rm loc}&=(\vbar_k,\Delta g_k,\text{stride},\text{phase timing}),\\
\vbar_k&=\frac1{T_k}\int_0^{T_k}v_k(t)\,\dd t.
\end{aligned}
\label{eq:loc_obs}
\end{equation}

\subsection{Validation Quantities and Evidential Roles}
\label{subsec:3seg_validation_roles}

The accepted reconstructed-cycle rows and the diagnostic summaries have different roles.  Accepted rows certify natural locomotion: they report AP/IP counts, sector identity, internal return, same-physical equality, POE residuals, dynamics consistency, and nonzero displacement.  Sampled diagnostics explain the geometry of already accepted cycles: period spans, activity spans, modal shares, plotted POE values, and speed-energy views.  The frozen-vector row anchors the AP and IP families to the same physical architecture.  Control experiments test weaker alternatives, such as large internal activity, sector identity alone, AP period representations, opened motion without POE, or isolated physical ingredients.

The speed \(\vbar\) is the opened-cycle mean \(T^{-1}\int_0^T v(t)\,\dd t\); if a chart carries an active speed coordinate, that coordinate is checked against the reconstructed cycle before reporting.  The diagnostic \(\Anonv\) denotes the peak-to-peak amplitude over the non-propulsive internal coordinates, normalized in the reported numerical coordinates; it is not an acceptance row.  The POE residual \(\Psipoe\) is a final naturality row only when evaluated on an accepted opened cycle.  The right-hand-side residual \(R_{\rm rhs}^{\rm fin}\) is the reconstructed-cycle defect in \eqref{eq:rhs}; when an interior value \(\Rrhs\) is reported, it is the same defect restricted away from endpoint effects.  The symbol \(\thetaStar\) denotes the fixed passive parameter vector in \eqref{eq:3seg_phys_vector_model} whose numerical values are listed in Table~\ref{tab:3seg_theta_star}.  The same-physical pair certificate in \eqref{eq:paired_certificate} is the numerical object supporting the statement that this one frozen vector carries two modal NLM families.

\section{2SEG Results: Exact Scalar Selection}
\label{sec:2seg_results}

The 2SEG experiment isolates the scalar case of the natural-locomotion principle.  The internal oscillator has one effective transverse degree of freedom.  After the propulsive coordinate is allowed to drift, the non-POE recurrence conditions and the speed-fixing condition leave one scalar transverse condition to choose the natural cycle.  The direct construction answers yes in the reported regular regime.  The accepted opened cycles solve
\begin{equation}
  \Rex=(I_{\POE},C_u,C_Q,S)^{\mathsf T}.
\label{eq:2seg_results_residual}
\end{equation}
The condition \(I_{\POE}=0\) means that the propulsive coordinate gives no net energy to the internal oscillator over one cycle; \(C_u=0\) and \(C_Q=0\) close the remaining opened carrier variables; and \(S=0\) fixes the signed-speed slice.

\begin{table*}[!t]
\centering
\caption{Exact opened-cycle closure certificate for the scalar 2SEG Natural Locomotion Manifold family.  The 29 certified entries are locomotor cycles with nonzero signed mean speed; \(I_{\rm POE}\) is the integrated Propulsion--Oscillator Exchange.}
\label{tab:2seg_exact_certificate}
\footnotesize
\setlength{\tabcolsep}{4pt}
\renewcommand{\arraystretch}{1.10}
\begin{tabularx}{\textwidth}{@{}l c X@{}}
\toprule
Certified quantity & Maximum or count & External reading \\
\midrule
Accepted opened cycles & 29/29 & All reported cycles close the full exact certificate. \\
Signed mean-speed span & \([-24.87,25.09]\) & The certified set contains locomotion in both body directions. \\
Scaled certificate norm \(\norm{\Rex}_{\rm scaled}\) & \(5.51\times10^{-12}\) & The full four-condition certificate is closed at numerical precision. \\
Opened exchange balance \(\abs{I_{\POE}}\) & \(6.02\times10^{-13}\) & Net propulsion--oscillator exchange cancels over the cycle. \\
First internal recurrence check \(\abs{C_u}\) & \(2.30\times10^{-13}\) & One non-POE return condition closes. \\
Second internal recurrence check \(\abs{C_Q}\) & \(9.86\times10^{-12}\) & The second non-POE return condition closes. \\
Speed-fixing condition \(\abs{S}\) & \(9.53\times10^{-13}\) & The cycle lies on the selected signed-speed slice. \\
\bottomrule
\end{tabularx}
\end{table*}

The exact closure certificate defines the direct family.  Continuation is used only afterward as an independent reference.  It checks whether a separately generated opened branch recovers the same family-level observables and loop geometry.  Figure~\ref{fig:2seg_overlay} visualizes this validation by overlaying exact exchange-return cycles with the nearest continuation cycles in three phase-plane projections.  The comparison is not time-synchronized; direct and continuation cycles may use different phase origins, sampling, or arclength parametrizations.

\begin{table*}[!t]
\centering
\caption{Independent continuation comparison for the 29 exact opened-cycle closure cycles.  Loop distances compare closed curves after cyclic alignment and coordinate scaling.}
\label{tab:2seg_reference_comparison}
\footnotesize
\setlength{\tabcolsep}{4pt}
\renewcommand{\arraystretch}{1.10}
\begin{tabularx}{\textwidth}{@{}l c X@{}}
\toprule
Comparison diagnostic & Max over certified cycles & Meaning \\
\midrule
Period difference \(\abs{\Delta T}\) & \(1.50\times10^{-5}\) & The direct and reference families have nearly the same cycle period. \\
Peak shape-amplitude difference & \(1.86\times10^{-5}\) & The internal oscillation amplitude agrees with the reference branch. \\
Peak yaw-rate difference & \(8.36\times10^{-4}\) & The yaw-rate scale is recovered to sub-\(10^{-3}\) error. \\
Scaled loop RMS & \(3.07\times10^{-3}\) & RMS distance between aligned and scaled closed phase loops. \\
Scaled Chamfer distance & \(2.14\times10^{-3}\) & Symmetric point-set distance between aligned closed loops. \\
\bottomrule
\end{tabularx}
\end{table*}

\begin{figure*}[!t]
\centering
\includegraphics[width=1.00\textwidth]{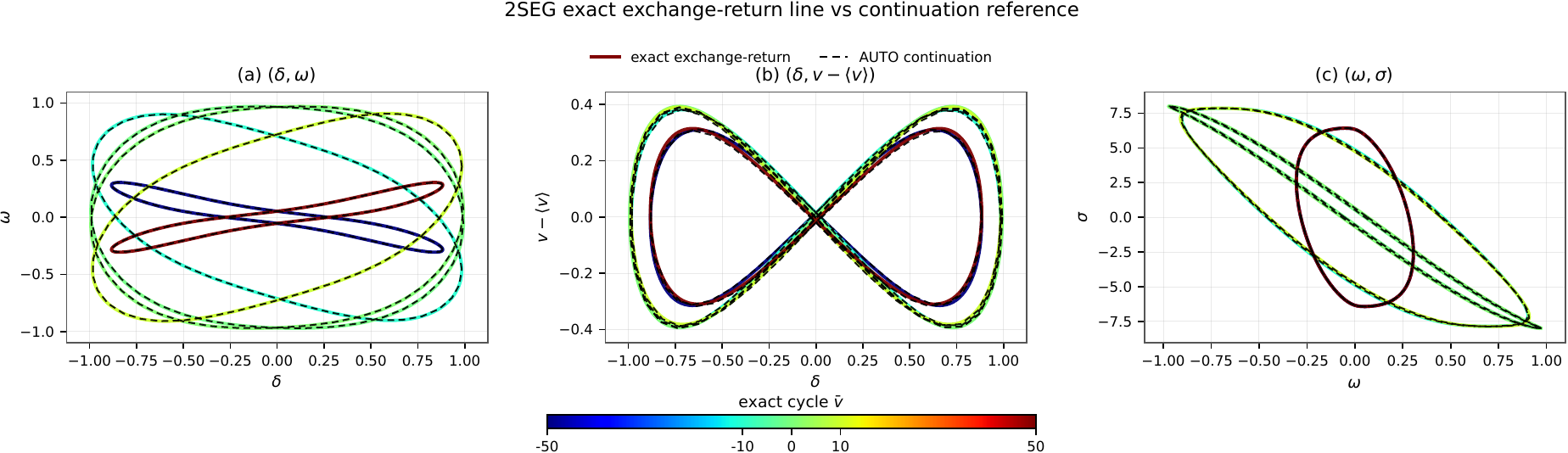}
\caption{Multi-speed geometry comparison between exact opened-cycle closure cycles and the independent continuation reference.  Solid colored curves are exact exchange-return cycles; black dashed curves are nearest continuation cycles.  The panels show internal angle/yaw rate \((\delta,\omega)\), internal angle/centered speed \((\delta,v-\langle v\rangle)\), and yaw rate/internal rate \((\omega,\sigma)\).}
\label{fig:2seg_overlay}
\end{figure*}

The physical content is that the internal oscillator returns, the body coordinate drifts, and the net energy exchange between propulsion and the internal oscillator cancels over the locomotor period.  In the scalar case, the exact theorem explains why this exchange-return row closes the missing coordinate.  In the 3SEG case, the same condition must be embedded in a modal certificate.

\section{3SEG Results: Same-Physical Modal Natural Locomotion}
\label{sec:3seg_results}

The 3SEG experiment asks the design question that the scalar theorem cannot answer: can one passive mechanical architecture contain more than one natural-locomotion family?  The result is affirmative: the fixed passive vector \(\thetaStar\), with parameter meanings given in Table~\ref{tab:3seg_parameter_definitions} and numerical values in Table~\ref{tab:3seg_theta_star}, supports both an in-phase family and an anti-phase family,
\begin{equation}
  \theta^{\IP}_{\rm phys}=\theta^{\AP}_{\rm phys}=\thetaStar,
  \qquad
  \calN_{\IP}(\thetaStar)\ne\varnothing,
  \qquad
  \calN_{\AP}(\thetaStar)\ne\varnothing .
\label{eq:3seg_same_physical_claim}
\end{equation}
The claim is not that two motions can be retuned separately.  It is that two opened-channel NLM families are recovered on the same passive architecture, with no family-specific change of masses, inertias, geometry, or stiffnesses.  The surface visualization in Fig.~\ref{fig:3seg_surface} now becomes the result-level picture of this claim: its top six plots are anti-phase samples and its bottom six plots are in-phase samples, all drawn from accepted cycles at the fixed architecture and all interpreted through the certificate rows below.

\begin{table}[!t]
\caption{Family-level certificate summary at the same physical vector \(\thetaStar\). AP denotes anti-phase, IP denotes in-phase, POE is Propulsion--Oscillator Exchange, and the residual maxima are computed over the full accepted traces.}
\label{tab:3seg_family_certificate}
\centering
\scriptsize
\setlength{\tabcolsep}{1.4pt}
\renewcommand{\arraystretch}{1.08}
\begin{tabularx}{\columnwidth}{@{}p{0.39\columnwidth}YY@{}}
\toprule
Quantity & AP family & IP family \\
\midrule
Accepted opened cycles & 2278 & 1358 \\
Sector-identity pass & 2278/2278 AP & 1358/1358 IP \\
Internal return & accepted periodic opened trace & accepted periodic opened trace \\
Physical parameters & \(\thetaStar\) & \(\thetaStar\) \\
Mean-speed span \(\vbar\) & [-126.96, 122.79] & [-155.56, 166.11] \\
Speed-coordinate source & active branch speed & recomputed cycle mean \\
Max \(\Rpoe=\abs{\Psipoe}\) & \(1.69\!\times\!10^{-8}\) & \(2.76\!\times\!10^{-8}\) \\
Max \(\Rrhs\) & \(1.10\!\times\!10^{-3}\) & \(1.50\!\times\!10^{-3}\) \\
\bottomrule
\end{tabularx}
\end{table}

The surface figure turns the certificate into a reader-facing picture: AP and IP occupy distinct sampled organizations of speed, angular, and transverse-energy variables while sharing the same passive vector.  The sampled ranges in Table~\ref{tab:3seg_sampled_diagnostics} then quantify this contrast without redefining acceptance.

\begin{table}[!t]
\caption{Diagnostic ranges from 72 sampled accepted cycles per family.  The quantity \(\Anonv\) is the normalized peak-to-peak amplitude over non-propulsive internal coordinates; these diagnostics interpret modal contrast but do not replace the branch-level counts in Table~\ref{tab:3seg_family_certificate}.}
\label{tab:3seg_sampled_diagnostics}
\centering
\scriptsize
\setlength{\tabcolsep}{1.5pt}
\renewcommand{\arraystretch}{1.08}
\begin{tabularx}{\columnwidth}{@{}p{0.38\columnwidth}YY@{}}
\toprule
Quantity & AP sample & IP sample \\
\midrule
Period span & [0.5563, 0.6295] & [0.8452, 0.8642] \\
\(\Anonv\) & [52.23, 84.43] & [4.915, 5.388] \\
Expected-sector share & AP 0.704--0.818 & IP 0.961--0.965 \\
Modal ratio & AP/IP 2.38--4.48 & IP/AP 24.53--27.47 \\
Max sampled \(\abs{\Psipoe}\) & \(1.60\!\times\!10^{-8}\) & \(2.50\!\times\!10^{-8}\) \\
Max sampled \(\Rrhs\) & \(9.15\!\times\!10^{-4}\) & \(1.49\!\times\!10^{-4}\) \\
\bottomrule
\end{tabularx}
\end{table}

\begin{table*}[!t]
\caption{Frozen physical architecture \(\thetaStar\) used for the same fixed-parameter AP/IP result.  AP and IP denote anti-phase and in-phase modal families.  Ratios are reported in the normalized numerical parameterization and should be read as design-coordinate comparisons, not dimensional laws.}
\label{tab:3seg_theta_star}
\centering
\scriptsize
\renewcommand{\arraystretch}{1.10}
\setlength{\tabcolsep}{3.0pt}
\begin{tabularx}{\textwidth}{@{}p{0.16\textwidth}p{0.43\textwidth}X@{}}
\toprule
Block & Values in \(\thetaStar\) & Mechanical reading in normalized coordinates \\
\midrule
Inertias & \(I_1=0.418642\), \(I_2=0.00608070\), \(I_3=0.000314296\). & Strong proximal inertial scale: \(I_1/I_2=68.85\), \(I_2/I_3=19.35\), \(I_1/I_3=1332\). \\
Geometry and masses & \(l_1=0.28\), \(L_2=0.150076\), \(L_3=0.105188\), \(m_1=6.0\), \(m_2=0.394571\), \(m_3=0.276554\). & Heavy proximal body and compact downstream links: \(m_1/m_2=15.21\), \(m_2/m_3=1.43\), \(l_1/L_3=2.66\). \\
Elastic and coupling terms & \(k_{12}=0.00118731\), \(k_{2,1}=0.652435\), \(k_{2,2}=0.300805\), \(k_{4,1}=5.258424\), \(k_{4,2}=0.449143\). & Asymmetric local stiffness with small normalized inter-joint coupling: \(k_{4,1}/k_{4,2}=11.71\), \(k_{2,1}/k_{2,2}=2.17\). \\
Actuation inputs & \(u_1=0\), \(u_2=0\). & Strictly conservative no-actuation setting; these symbols are torque inputs, not passive offsets. \\
\bottomrule
\end{tabularx}
\end{table*}

Same-physicality is the design content of the result: it turns two reconstructed modal families into evidence about one passive mechanical architecture.  The common mechanism combines proximal inertial leverage, compact downstream mass and length scales, asymmetric local stiffness, and small normalized inter-joint coupling.

\begin{figure*}[!t]
\centering
\makebox[\textwidth][c]{%
\includegraphics[width=0.54\textwidth]{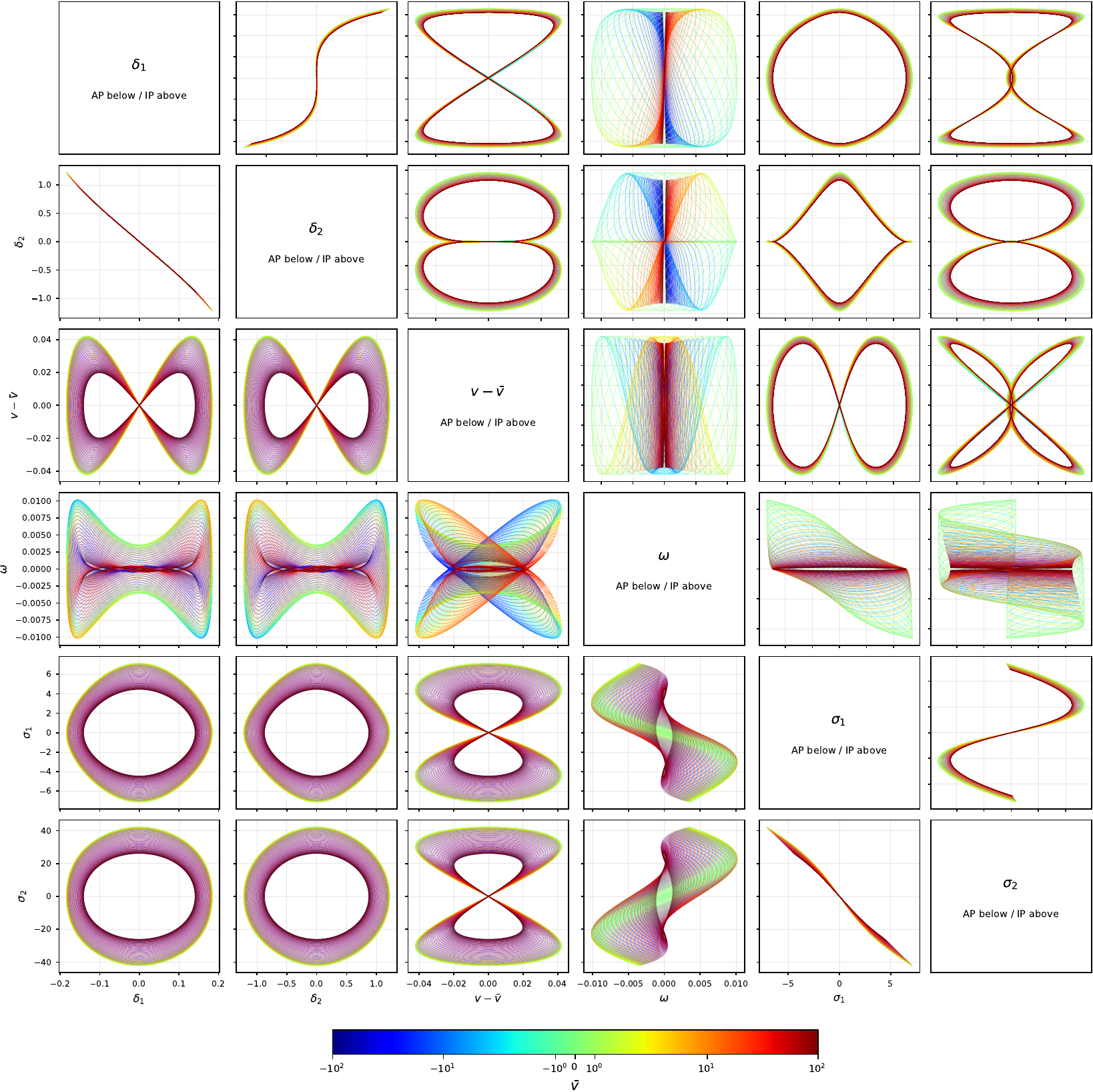}%
\hspace{0.01\textwidth}%
\includegraphics[width=0.45\textwidth]{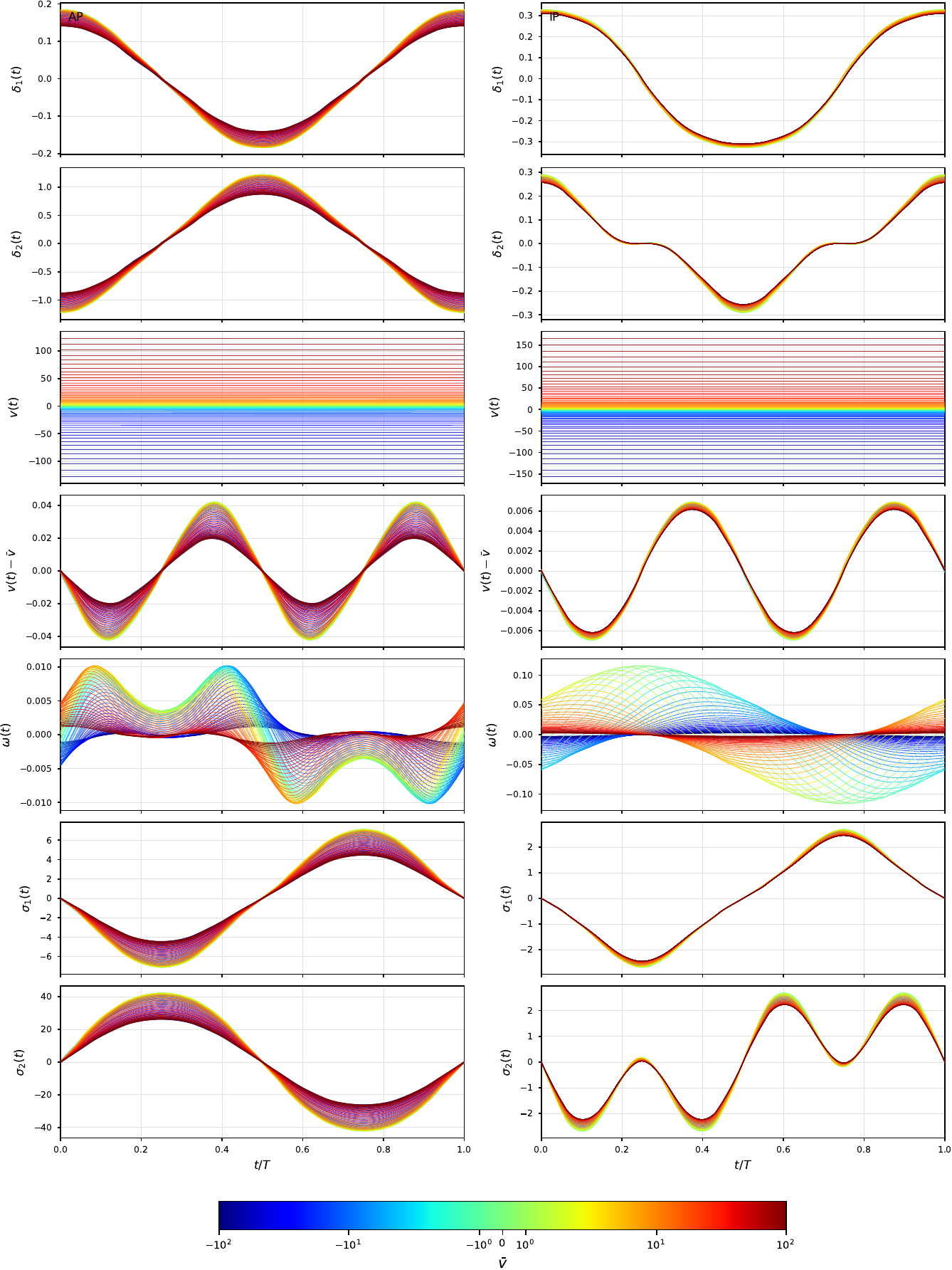}%
}
\caption{Accepted same fixed-parameter anti-phase/in-phase families.  Left: enlarged phase-pair geometry, with anti-phase cycles in the lower triangle and in-phase cycles in the upper triangle.  Right: enlarged time-domain choreography over one displayed period.  Together they show modal separation in projection and one-period choreography for the accepted same-physical families.}
\label{fig:3seg_pair_time}
\end{figure*}

Figure~\ref{fig:3seg_pair_time} gives the geometric and temporal reading of the positive result.  It shows that the accepted families have different internal choreographies; the certificate remains the opened-cycle acceptance in Tables~\ref{tab:3seg_family_certificate}--\ref{tab:3seg_theta_star}.

\subsection{3SEG Control Experiments}
\label{subsec:3seg_controls}

The first control sequence tests whether weaker naturality cues can replace opened-cycle NLM acceptance.  It progresses from internal amplitude, to sector identity, to AP representation choices, to opened motion without POE, and finally to the full opened POE-certified route.

\begin{table*}[!t]
\caption{Naturality controls.  Each row tests whether a weaker cue can replace the opened-cycle Natural Locomotion Manifold (NLM) acceptance test.}
\label{tab:3seg_naturality_controls}
\centering
\scriptsize
\setlength{\tabcolsep}{2.2pt}
\renewcommand{\arraystretch}{1.11}
\begin{tabularx}{\textwidth}{@{}p{0.155\textwidth}p{0.275\textwidth}p{0.255\textwidth}X@{}}
\toprule
Alternative tested & Control & What survives & What fails / lesson \\
\midrule
Large internal activity alone & Choose AP-looking closed supports with large non-propulsive amplitude, then reopen propulsion. & Large amplitudes: targeted controls reached \(\Anonv=225.66\)--369.96; a larger AP-oriented finite screen reached 60.34--757.27. & No moving AP family was recovered.  The largest AP-like amplitudes appear in rejected controls, so amplitude is not the selection principle. \\
\addlinespace[0.25em]
Sector identity alone & Test candidates that retain the IP or AP sector without full opened-cycle acceptance. & Several controls preserve sector identity, including stationary IP-like and AP-like cycles. & Sector identity organizes candidates, but it does not produce locomotion by itself. \\
\addlinespace[0.25em]
AP period-only descriptions & Represent an AP internal support as a single-period or three-period object while not keeping opened mean-speed/exchange active. & The AP sector description remains available. & Both period-only choices stayed stationary: 0/4 moving cases for single-period and 0/4 for three-period descriptions. \\
\addlinespace[0.25em]
AP opened speed/exchange form & Use an opened AP formulation where the mean-speed coordinate and POE exchange remain active during reconstruction. & Movement appears in 2/4 single-period and 4/4 three-period cases. & AP movement appears only when opened propulsion speed and exchange balance remain active during reconstruction. \\
\addlinespace[0.25em]
Opened motion without POE closure & Remove the POE row from opened-cycle refinement and screen the same finite candidate set. & One identity-preserving IP-moving case survives, with 10 moving continuations; the reconstructed-cycle check reports \(\Rrhs=1.04\times10^{-1}\). & Removing POE does not remove all opened motion; it removes AP and the paired same-physical AP/IP result. \\
\addlinespace[0.25em]
Full opened POE-certified route & Accept only cycles that pass sector identity, internal return, POE closure, dynamics consistency, and nonzero displacement. & AP 2278/2278 and IP 1358/1358 accepted cycles are recovered at the same \(\thetaStar\). & The recovered object is a pair of modal NLM families, not merely oscillation amplitude, a sector label, an AP representation, or motion alone. \\
\bottomrule
\end{tabularx}
\end{table*}

Table~\ref{tab:3seg_naturality_controls} separates modal organization from natural locomotion.  Large AP-like oscillations can be manufactured without recovering an AP family, and sector-compatible candidates can remain stationary.  The no-POE row is discriminating: within the tested construction, opened motion can survive without POE, but only as a partial IP-side result with a large dynamics mismatch; the AP family and paired AP/IP certificate disappear.  POE is therefore the opened-channel exchange condition that filters motion into natural locomotion, while modal identity and same-physical certification supply the information missing from a scalar condition.

The second control sequence asks whether same-physical AP/IP multiplicity comes from one isolated physical lever.  If one ingredient were sufficient, then an inertia-only, stiffness/coupling-only, or geometry/mass-only control should recover both moving families.  Instead, each isolated design preserves only a fragment of the final certificate.

\begin{table*}[!t]
\caption{Mechanical-design controls.  Isolated controls preserve fragments of the certificate, but none recovers the same fixed-parameter anti-phase/in-phase pair.}
\label{tab:3seg_mechanical_controls}
\centering
\scriptsize
\setlength{\tabcolsep}{2.2pt}
\renewcommand{\arraystretch}{1.11}
\begin{tabularx}{\textwidth}{@{}p{0.18\textwidth}p{0.25\textwidth}p{0.25\textwidth}X@{}}
\toprule
Isolated ingredient & What survives & What fails & Design lesson \\
\midrule
Elastic/coupling set & AP and IP sector identity can be preserved at small amplitude: AP \(\Anonv\approx3.87\)--3.90, IP \(\Anonv\approx0.18\). & All moving counts are zero. & Local stiffness and coupling shape modal oscillators, but in isolation they do not complete a locomotor mechanism. \\
\addlinespace[0.25em]
Primary inertial scale & IP and AP identity pass.  AP-like activity rises to \(\Anonv\approx8.88\)--9.07, while IP remains near 0.25. & No case moves. & The inertial lever preferentially amplifies AP-like activity, but amplification alone does not produce AP locomotion or AP/IP multiplicity. \\
\addlinespace[0.25em]
Downstream geometry/mass distribution & The IP control preserves identity with \(\Anonv=6.76\).  AP variants produce very large activity, \(\Anonv=556.72\)--570.31. & IP does not move.  AP variants lose AP identity and have large dynamics defects. & Geometry/mass placement creates the strongest AP-like false positives when the rest of the architecture is incomplete. \\
\addlinespace[0.25em]
Partial physical designs & In 24 finite designs assembled from subsets of the tested ingredients, four IP variants pass identity. & No variant moves, and no AP variant passes identity. & IP identity is comparatively easier to preserve under partial designs, but identity-stable IP oscillations are not automatically locomotor. \\
\addlinespace[0.25em]
Complete coupled architecture & The full \(\thetaStar\) architecture gives AP 2278/2278 and IP 1358/1358 accepted cycles. & None of the isolated controls reproduces the paired result. & Modal multiplicity is recovered as a property of the coupled passive architecture, not as a single-knob effect. \\
\bottomrule
\end{tabularx}
\end{table*}

The mechanical failures are asymmetric.  Elastic/coupling-only controls preserve local modal organization but stay stationary.  The primary inertial control amplifies AP-like activity more than IP-like activity, yet it does not turn either sector into a moving opened family.  The downstream geometry/mass control is the strongest false-positive mechanism: it creates very large AP-like motion while losing AP identity and dynamics consistency.  Conversely, IP identity survives several partial mechanisms, but those identity-stable IP candidates remain non-locomotor.  These controls support a bounded design interpretation: AP-like activity is easy to amplify but hard to certify, whereas IP identity is easier to preserve but still requires the complete architecture to become an NLM.

\FloatBarrier
\section{Discussion}
\label{sec:discussion}

The two realizations show different mathematical statuses of the same principle.  In 2SEG, the scalar closed support and the positive oriented section make storage equality equivalent to coordinate equality.  Once the carrier-variable return and mean-speed normalization conditions close, zero POE closes the remaining transverse coordinate.  That is why the 2SEG result is an exact exchange-return theorem and why the numerical implementation can be read as a direct scalar computation rather than as a continuation discovery.

The comparison below summarizes the resulting status difference.  The distinction is not that the 2SEG is theoretical and the 3SEG is numerical.  Both use mechanics and both are computed.  The distinction is the dimension of the remaining transverse information after carrier and normalization rows have closed.  In the 2SEG, one scalar storage row identifies the returned section point.  In the 3SEG, one scalar storage row identifies only a hypersurface inside the transverse section, so modal identity and branch membership remain active parts of the claim.

\begin{table*}[!t]
\caption{Complementary mathematical status of the two realizations.}
\label{tab:2seg_3seg_status}
\centering
\footnotesize
\setlength{\tabcolsep}{3pt}
\renewcommand{\arraystretch}{1.12}
\begin{tabularx}{\textwidth}{@{}p{0.16\textwidth}p{0.25\textwidth}p{0.25\textwidth}X@{}}
\toprule
Aspect & 2SEG scalar realization & 3SEG modal realization & Consequence\\
\midrule
Effective internal dimension & one transverse coordinate after orientation & two transverse modal directions & storage equality is complete only in 2SEG\\
Closed object & scalar carrier-closed support & IP/AP carrier-closed modal supports & 3SEG must preserve sector identity\\
Opened variable & finite-speed yaw-propulsion return & lifted modal cycle with carrier and internal return & 3SEG needs a larger reconstructed-cycle certificate\\
POE status & missing scalar section row; theorem-grade equivalence under the regular annulus & necessary exchange row embedded in a modal certificate & scalar sufficiency does not generalize directly\\
Numerical role of continuation & independent geometric reference after direct solve & candidate generation and transport among charts & branch membership alone is never a certificate\\
Design interpretation & exact line of finite-speed scalar NLM cycles & one frozen passive architecture supporting two modal NLM families & 2SEG validates the theorem; 3SEG demonstrates modal multiplicity\\
\bottomrule
\end{tabularx}
\end{table*}

This separation also clarifies what should be compared across systems.  The 2SEG should not be judged by the number of modal families it supports; its contribution is exactness.  The 3SEG should not be judged by whether POE alone closes a state; its contribution is the same-physical modal certificate.  The shared scientific content is the closed/open exchange principle.  The different mathematical statuses are consequences of internal dimension, not changes of definition.

In 3SEG, the scalar obstruction is structural.  The transverse state has multiple coordinates, and equality of \(\Eperp\) does not identify a point on the energy surface.  Modal organization must first select a closed-channel family.  The opened lift must then preserve sector identity, close the internal state, close POE, pass the right-hand-side dynamics check, keep the physical vector fixed, and reconstruct nonzero displacement.  The 3SEG result is therefore not a second scalar theorem; it is a same-physical modal certificate.

The scope is deliberately finite.  The results do not classify all 2SEG or 3SEG architectures, prove optimality of \(\thetaStar\), or assert scalar POE sufficiency for general multi-effective-coordinate locomotors.  The essential object is not nonholonomicity itself, but environment-mediated exchange.  In this paper the mediator is an ideal continuous nonholonomic constraint, so the exchange is conservative and POE closure is an internal balance.  In walking, the mediator would be discontinuous contact; in swimming or flying, it would be a continuous but non-ideal fluid interaction with losses and possibly inputs.  The mathematical balance must change, but the organizing question remains the same.  The conservative ideal-constraint setting therefore excludes dissipative fluids, frictional losses, impacts, and forced actuation.  Its value is as a baseline: once dissipation is introduced, one may expect conservative NLM families to deform into resonance-locomotion families in which forcing compensates losses without imposing arbitrary trajectories.  That extension is outside the present paper.

The methodological distinction is nevertheless broad.  A geometric gait asks what displacement a shape loop reconstructs.  A continuation study asks what family of solutions can be tracked.  The NLM construction asks whether a closed-channel natural oscillator survives the opened propulsive channel with zero net exchange, internal recurrence, and pose drift.  In one effective degree of freedom this becomes an exact scalar closure.  In several effective degrees of freedom it becomes a modal certificate and a design question: does a passive architecture support no natural family, one family, or several families with different phase organizations?

The same caution applies to optimality.  A POE-certified NLM is not claimed to minimize transport cost, maximize speed, or optimize robustness.  It is a mechanically selected locomotor family.  Optimality questions can be asked after the family is identified, but they are not the defining criterion.  This separates two design operations: first find the motions the passive architecture can sustain naturally, then decide which member is preferable for a task or under actuation.

\subsection{Reading the exchange row}
\label{subsec:exchange_row_reading}

The POE row should not be interpreted as an energetic cost.  It is a cyclic compatibility condition between the internal oscillator and the opened propulsive channel.  A positive instantaneous \(\Ppoe\) during part of the cycle and a negative instantaneous \(\Ppoe\) during another part are compatible with natural locomotion; what is forbidden is net drift of the transverse reservoir after one internal period.  This is why \(\Psipoe=0\) is different from minimizing effort or suppressing all exchange.  Propulsion can interact with the oscillator, but it must return what it borrows over the completed cycle.

In the scalar theorem, this balance has a second meaning: it is also the missing coordinate closure.  On the positive oriented section, storage is a strictly monotone function of the scalar crossing speed.  Therefore zero POE is not merely a physical sentence; it is an algebraic return condition.  This is the strongest possible status of the row and is the reason the 2SEG occupies a special place in the paper.

In the modal certificate, the same row loses that sufficiency but retains its physical meaning.  It says that the opened channel is exchange-balanced on the reconstructed IP or AP cycle.  It does not say that the cycle has the correct modal identity, that the internal state returns, or that the same physical vector is used for both families.  Those claims require their own rows.  The 3SEG construction is therefore not weaker because POE is less important; it is more explicit about the additional information that a multi-effective-degree-of-freedom oscillator needs.

\subsection{Scope of the conservative claim}
\label{subsec:conservative_page}

Several natural extensions are intentionally left out of the main claim.  The paper does not optimize the passive architecture, globally classify 3SEG modal sectors, or claim that a scalar exchange row is sufficient in higher dimension.  It also does not replace forced or dissipative locomotion theory.  Its role is to provide a conservative reference family against which such extensions can later be organized.  Hybrid walkers may add piecewise-holonomic or discontinuous contacts; swimmers and flyers may add continuous non-ideal media and actuation, as in forced-oscillation robotic horizons such as \cite{Rajaomarosata2025OCEANS}; in each case the conservative NLM would serve as a baseline, not as a finished theory.

These exclusions are not weaknesses of the formulation.  They keep the contribution testable.  The 2SEG claim can be challenged by checking the regular annulus assumptions and the exchange-return rows.  The 3SEG claim can be challenged by recomputing the same-physical paired certificate, the sector-identity check, the right-hand-side residuals, and the controls.  A broader theory may later add dissipation, forcing, robustness, or optimality, but those extensions should preserve the central distinction made here: a natural locomotion family is first selected by mechanical exchange and recurrence, and only then optimized, actuated, or stabilized.

\section{Conclusion}
\label{sec:conclusion}

Natural locomotion can be characterized as a mechanically selected family, not merely as a closed shape loop or a continued trajectory.  The proposed selection rule is internal recurrence with nonzero pose drift and zero cyclic Propulsion--Oscillator Exchange.  The method closes the propulsive channel to expose the internal oscillator, then opens it and accepts only cycles whose exchange, internal state, dynamics, fixed-parameter, and displacement conditions close at the level required by their internal dimension.

The 2SEG realization gives the strongest scalar status: under a regular oriented-section assumption, the POE row is equivalent to the missing scalar transverse return, and the reported finite-speed branch matches an independent continuation reference.  The 3SEG realization gives the modal extension: one frozen passive architecture supports both in-phase and anti-phase POE-certified NLM families, while controls show why amplitude, sector labels, opened motion, or isolated physical ingredients do not replace the full certificate.  The resulting design question is explicit: which passive architectures support no natural locomotion family, which support one, and which support several mechanically distinct families?  The article therefore identifies natural locomotion with an exchange-balanced invariant family: the environment-mediated propulsive channel may exchange power with the internal oscillator, but a natural cycle returns that exchange to zero over one period.

\bibliographystyle{IEEEtran}
\bibliography{references}

% Generated by IEEEtran.bst, version: 1.14 (2015/08/26)
\begin{thebibliography}{10}
\providecommand{\url}[1]{#1}
\csname url@samestyle\endcsname
\providecommand{\newblock}{\relax}
\providecommand{\bibinfo}[2]{#2}
\providecommand{\BIBentrySTDinterwordspacing}{\spaceskip=0pt\relax}
\providecommand{\BIBentryALTinterwordstretchfactor}{4}
\providecommand{\BIBentryALTinterwordspacing}{\spaceskip=\fontdimen2\font plus
\BIBentryALTinterwordstretchfactor\fontdimen3\font minus
  \fontdimen4\font\relax}
\providecommand{\BIBforeignlanguage}[2]{{%
\expandafter\ifx\csname l@#1\endcsname\relax
\typeout{** WARNING: IEEEtran.bst: No hyphenation pattern has been}%
\typeout{** loaded for the language `#1'. Using the pattern for}%
\typeout{** the default language instead.}%
\else
\language=\csname l@#1\endcsname
\fi
#2}}
\providecommand{\BIBdecl}{\relax}
\BIBdecl

\bibitem{McGeer1990}
T.~McGeer, ``Passive dynamic walking,'' \emph{The International Journal of
  Robotics Research}, vol.~9, no.~2, pp. 62--82, 1990.

\bibitem{Collins2005}
S.~Collins, A.~Ruina, R.~Tedrake, and M.~Wisse, ``Efficient bipedal robots
  based on passive-dynamic walkers,'' \emph{Science}, vol. 307, no. 5712, pp.
  1082--1085, 2005.

\bibitem{Kashiri2018}
N.~Kashiri, A.~Abate, S.~J. Abram, A.~Albu-Sch{\"a}ffer, P.~J. Clary,
  S.~Faraji, R.~Furn{\'e}mont, M.~Garabini, A.~Margan, F.~Negrello,
  N.~Tsagarakis, D.~G. Caldwell, A.~Bicchi, and C.~Semini, ``An overview on
  principles for energy efficient robot locomotion,'' \emph{Frontiers in
  Robotics and AI}, vol.~5, p. 129, 2018.

\bibitem{Schonebaum2021}
J.~K. Schonebaum, F.~Alijani, and G.~Radaelli, ``Review on mobile robots that
  exploit resonance,'' \emph{Proceedings of the Institution of Mechanical
  Engineers, Part C: Journal of Mechanical Engineering Science}, vol. 235,
  no.~24, pp. 7907--7924, 2021.

\bibitem{Blickhan1989}
R.~Blickhan, ``The spring-mass model for running and hopping,'' \emph{Journal
  of Biomechanics}, vol.~22, no. 11--12, pp. 1217--1227, 1989.

\bibitem{FullKoditschek1999}
R.~J. Full and D.~E. Koditschek, ``Templates and anchors: Neuromechanical
  hypotheses of legged locomotion on land,'' \emph{Journal of Experimental
  Biology}, vol. 202, no.~23, pp. 3325--3332, 1999.

\bibitem{Holmes2006}
P.~Holmes, R.~J. Full, D.~Koditschek, and J.~Guckenheimer, ``The dynamics of
  legged locomotion: Models, analyses, and challenges,'' \emph{SIAM Review},
  vol.~48, no.~2, pp. 207--304, 2006.

\bibitem{Rosenberg1962}
R.~M. Rosenberg, ``The normal modes of nonlinear n-degree-of-freedom systems,''
  \emph{Journal of Applied Mechanics}, vol.~29, no.~1, pp. 7--14, 1962.

\bibitem{ShawPierre1993}
S.~W. Shaw and C.~Pierre, ``Normal modes for non-linear vibratory systems,''
  \emph{Journal of Sound and Vibration}, vol. 164, no.~1, pp. 85--124, 1993.

\bibitem{Kerschen2009PartI}
G.~Kerschen, M.~Peeters, J.-C. Golinval, and A.~F. Vakakis, ``Nonlinear normal
  modes, Part {I}: A useful framework for the structural dynamicist,''
  \emph{Mechanical Systems and Signal Processing}, vol.~23, no.~1, pp.
  170--194, 2009.

\bibitem{Peeters2009PartII}
M.~Peeters, R.~Vigui{\'e}, M.~S{\'e}n{\'e}chal, G.~Kerschen, and J.-C.
  Golinval, ``Nonlinear normal modes, Part {II}: Toward a practical computation
  using numerical continuation techniques,'' \emph{Mechanical Systems and
  Signal Processing}, vol.~23, no.~1, pp. 195--216, 2009.

\bibitem{AlbuSchaeffer2020}
A.~Albu-Sch{\"a}ffer and C.~Della~Santina, ``A review on nonlinear modes in
  conservative mechanical systems,'' \emph{Annual Reviews in Control}, vol.~50,
  pp. 49--71, 2020.

\bibitem{HallerPonsioen2016}
G.~Haller and S.~Ponsioen, ``Nonlinear normal modes and spectral submanifolds:
  Existence, uniqueness and use in model reduction,'' \emph{Nonlinear
  Dynamics}, vol.~86, pp. 1493--1534, 2016.

\bibitem{DellaSantina2021ISER}
C.~Della~Santina, D.~Lakatos, A.~Bicchi, and A.~Albu-Schaeffer, ``Using
  nonlinear normal modes for execution of efficient cyclic motions in
  articulated soft robots,'' in \emph{Experimental Robotics}.\hskip 1em plus
  0.5em minus 0.4em\relax Cham: Springer, 2021, pp. 566--575.

\bibitem{DellaSantina2021CDC}
------, ``Exciting nonlinear modes of conservative mechanical systems by
  operating a master variable decoupling,'' in \emph{Proceedings of the 60th
  IEEE Conference on Decision and Control (CDC)}.\hskip 1em plus 0.5em minus
  0.4em\relax IEEE, 2021, pp. 2448--2455.

\bibitem{Bjelonic2022}
F.~Bjelonic, A.~Sachtler, A.~Albu-Sch{\"a}ffer, and C.~Della~Santina,
  ``Experimental closed-loop excitation of nonlinear normal modes on an elastic
  industrial robot,'' \emph{IEEE Robotics and Automation Letters}, vol.~7,
  no.~2, pp. 1689--1696, 2022.

\bibitem{Calzolari2023}
D.~Calzolari, C.~Della~Santina, A.~M. Giordano, A.~Schmidt, and
  A.~Albu-Sch{\"a}ffer, ``Embodying quasi-passive modal trotting and pronking
  in a sagittal elastic quadruped,'' \emph{IEEE Robotics and Automation
  Letters}, vol.~8, no.~4, pp. 2285--2292, 2023.

\bibitem{Calzolari2026NMM}
D.~Calzolari, C.~Della~Santina, and A.~Albu-Sch{\"a}ffer, ``Exciting families
  of passive gaits in an elastic quadruped via natural motion manifold
  control,'' \emph{The International Journal of Robotics Research}, vol.~45,
  no.~2, pp. 233--258, 2026.

\bibitem{KellyMurray1995}
S.~D. Kelly and R.~M. Murray, ``Geometric phases and robotic locomotion,''
  \emph{Journal of Robotic Systems}, vol.~12, no.~6, pp. 417--431, 1995.

\bibitem{OstrowskiBurdick1998}
J.~Ostrowski and J.~Burdick, ``The geometric mechanics of undulatory robotic
  locomotion,'' \emph{The International Journal of Robotics Research}, vol.~17,
  no.~7, pp. 683--701, 1998.

\bibitem{BlochKrishnaprasadMarsdenMurray1996}
A.~M. Bloch, P.~S. Krishnaprasad, J.~E. Marsden, and R.~M. Murray,
  ``Nonholonomic mechanical systems with symmetry,'' \emph{Archive for Rational
  Mechanics and Analysis}, vol. 136, no.~1, pp. 21--99, 1996.

\bibitem{Bloch2015}
A.~M. Bloch, \emph{Nonholonomic Mechanics and Control}, 2nd~ed.\hskip 1em plus
  0.5em minus 0.4em\relax New York: Springer, 2015.

\bibitem{Raff2022}
M.~Raff, N.~Rosa~Jr., and C.~D. Remy, ``Connecting gaits in energetically
  conservative legged systems,'' \emph{IEEE Robotics and Automation Letters},
  vol.~7, no.~3, pp. 8407--8414, 2022.

\bibitem{KuznetsovContinuation}
Y.~A. Kuznetsov, \emph{Elements of Applied Bifurcation Theory}, 3rd~ed.\hskip
  1em plus 0.5em minus 0.4em\relax New York: Springer, 2004.

\bibitem{DoedelOldeman2012}
E.~J. Doedel, A.~R. Champneys, F.~Dercole, T.~F. Fairgrieve,
  Y.~A. Kuznetsov, B.~E. Oldeman, R.~C. Paffenroth, B.~Sandstede,
  X.~Wang, and C.~Zhang, \emph{AUTO-07P: Continuation and Bifurcation
  Software for Ordinary Differential Equations}.\hskip 1em plus 0.5em minus
  0.4em\relax Montreal, Canada: Department of Computer Science, Concordia
  University, 2007.


\bibitem{Rajaomarosata2025IFAC}
M.~Rajaomarosata, L.~Jaulin, L.~Lapierre, and S.~Rohou, ``Natural efficient
  gaits from nonholonomic locomotion nonlinear normal mode ({NL-NNM}): The
  {Pendrivencar} case,'' \emph{Mechatronics}, vol. 110, p. 103366, 2025.

\bibitem{Arnold1989}
V.~I. Arnol'd, \emph{Mathematical Methods of Classical Mechanics},
  2nd~ed.\hskip 1em plus 0.5em minus 0.4em\relax New York: Springer, 1989.

\bibitem{NayfehMook1995}
A.~H. Nayfeh and D.~T. Mook, \emph{Nonlinear Oscillations}.\hskip 1em plus
  0.5em minus 0.4em\relax New York: Wiley, 1995.

\bibitem{SandersVerhulstMurdock2007}
J.~A. Sanders, F.~Verhulst, and J.~Murdock, \emph{Averaging Methods in
  Nonlinear Dynamical Systems}, 2nd~ed.\hskip 1em plus 0.5em minus 0.4em\relax
  New York: Springer, 2007.

\bibitem{Rajaomarosata2025OCEANS}
M.~Rajaomarosata, L.~Jaulin, L.~Lapierre, and S.~Rohou, ``Energy-efficient
  nonholonomic fish robot: Nonlinear forced oscillations,'' in \emph{OCEANS
  2025 -- Brest}.\hskip 1em plus 0.5em minus 0.4em\relax IEEE, 2025, pp. 1--7.

\end{thebibliography}

\end{document}